\newtheorem{theorem}{Theorem}
\newtheorem{proposition}{Proposition}
\newtheorem{lemma}{Lemma}
\newcommand{\iprod}[2]{\langle #1, #2 \rangle}   
\newcommand{\reals}{\mathbb{R}}
\newcommand{\vecfont}[1]{\mathbf{#1}}
\newcommand{\mat}[1]{\mathbf{#1}}
\newcommand{\note}[1]{\marginpar{\tiny *note in TeX*}}
\newcommand{\ignore}[1]{}
\renewcommand{\phi}{\varphi}
\newcommand{\twonorm}[1]{\left\| {#1} \right\|_2}
\DeclareMathOperator*{\argmin}{argmin}
\newcommand{\expec}[1]{\mathbb{E}\left[#1\right]}
\newcommand{\prob}[1]{\mathbb{P}\left[#1\right]}
\newcommand{\order}[1]{O\left({#1}\right)}
\begin{document}

\title{Alternating Minimization for Mixed Linear Regression}
\date{}
\author{
% You can go ahead and credit any number of authors here,
% e.g. one 'row of three' or two rows (consisting of one row of three
% and a second row of one, two or three).
%
% The command \alignauthor (no curly braces needed) should
% precede each author name, affiliation/snail-mail address and
% e-mail address. Additionally, tag each line of
% affiliation/address with \affaddr, and tag the
% e-mail address with \email.
%%
Xinyang Yi\\
{The University of Texas at Austin}\\
{yixy@utexas.edu}
\and
Constantine Caramanis\\
{The University of Texas at Austin}\\
{constantine@utexas.edu}
\and
Sujay Sanghavi \\
{The University of Texas at Austin}\\
{sanghavi@mail.utexas.edu}
%Rachel Ward \\
%{The University of Texas at Austin}\\
%{rward@math.utexas.edu}
}
% There's nothing stopping you putting the seventh, eighth, etc.
% author on the opening page (as the 'third row') but we ask,
% for aesthetic reasons that you place these 'additional authors'
% in the \additional authors block, viz.
% Just remember to make sure that the TOTAL number of authors
% is the number that will appear on the first page PLUS the
% number that will appear in the \additionalauthors section.
%\keywords{matrix completion, coherent, nuclear norm, weighted nuclear norm }

\maketitle
%\onecolumn
\begin{abstract}
Mixed linear regression involves the recovery of two (or more) unknown vectors from unlabeled linear measurements; that is, where each sample comes from exactly one of the vectors, but we do not know which one. It is a classic problem, and the natural and empirically most popular approach to its solution has been the EM algorithm. As in other settings, this is prone to bad local minima; however, each iteration is very fast (alternating between guessing labels, and solving with those labels).

In this paper we provide a new initialization procedure for EM, based on finding the leading two eigenvectors of an appropriate matrix. We then show that with this, a re-sampled version of the EM algorithm provably converges to the correct vectors, under natural assumptions on the sampling distribution, and with nearly optimal (unimprovable) sample complexity. This provides not only the first characterization of EM's performance, but also much lower sample complexity as compared to both standard (randomly initialized) EM, and other methods for this problem.
\end{abstract}

\section{Introduction}
In this paper we consider the {\em mixed linear regression} problem: we would like to recover vectors from linear observations of each, except that these are {\em unlabeled}. In particular, consider for $i=1,\ldots,N$
\[
y_i = \iprod{\vecfont{x}_i}{\vecfont{\beta}^*_1}\, z_i \,  + \, \iprod{\vecfont{x}_i}{\vecfont{\beta}^*_2}\,(1-z_i) \, + \,  w_i,
\]
where each $z_i$ is either 1 or 0, and $w_i$ is noise independent of everything else. A value $z_i = 1$ means the $i^{th}$ measurement comes from $\vecfont{\beta}^*_1$, and $z_i = 0$ means it comes from $\vecfont{\beta}^*_2$. Our objective is to infer  $\vecfont{\beta}^*_1,\vecfont{\beta}^*_2 \in \mathbb{R}^k$ given $(y_i,\vecfont{x}_i), i = 1,\ldots,N$; in particular, we do not have access to the labels $z_i$. For now\footnote{As we discuss in more detail below, some work has been done in the sparse version of the problem, though the work we are aware of does not give an efficient algorithm with performance guarantees on $\|\hat{\beta}_i - \beta^{\ast}\|$, $i=1,2$.}, we do not make {\it a priori} assumptions on the $\beta$'s; thus we are necessarily in the regime where the number of samples, $N$, exceeds the dimensionality, $k$ ($N>k$).

We show in Section \ref{sec:mainresults} that this problem is NP-hard in the absence of any further assumptions. We therefore focus on the case where the measurement vectors $\mathbf{x}_i$ are independent, uniform Gaussian vectors in $\mathbb{R}^p$. While our algorithm works in the noisy case, our performance guarantees currently apply only to the setting of no noise, i.e., $w_i = 0$. 

Mixed linear regression naturally arises in any application where measurements are from multiple latent classes and we are interested in parameter estimation. See \cite{deb2000estimates} for application of mixed linear regression in health care and work in \cite{grun2007applications} for some related dataset.

The natural, and empirically most popular, approach to solving this problem (as with other problems with missing information) is the Expectation-Maximization, or EM, algorithm; see e.g.\cite{Viele2002}. In our context, EM involves iteratively alternating between updating estimates for $\beta_1,\beta_2$, and  estimates for the labels; typically, unless there is specific side-information, the initialization is random. Each step can be solved in closed form, and hence is very computationally efficient. However, as widely acknowledged, there has been to date no way to analytically pre-determine the performance of EM; as in other contexts, it is prone to getting trapped in local minima \cite{wu1983convergence}. 

{\bf Contribution of our paper:} We provide the first analytical guarantees on the performance of the EM algorithm for mixed linear regression. A key contribution of our work, both algorithmically and for analysis, is the initialization step. In particular, we develop an initialization scheme, and show that with this EM will converge at least exponentially fast to the correct $\beta$'s and finally recover ground truth exactly, with $O(k\log^2 k)$ samples for a problem of dimension $k$. This sample complexity is optimal, up to logarithmic factors, in the dimension and in the error parameter. We are investigating the proposed algorithm in the noisy case, while in this paper we only present noiseless result.
%This holds under two natural simplifying assumptions: the measurement vectors $\vecfont{x}_i$ are independent uniform Gaussians in $\mathbb{R}^p$, and there is no noise, i.e. $w_i = 0$.

%The rest of this paper is organized as follows {\bf @@}

\subsection{Related Work}

There is of course a huge amount of work in both latent variable modeling, and finite mixture models; here we do not attempt to cover this broad spectrum, but instead focus on the most relevant work, pertaining directly to mixed linear regression.

The work in \cite{Viele2002} describes the application of the EM algorithm to the mixed linear regression problem, both with bayesian priors on the frequencies for each mixture, and in the non-parametric setting (i.e. where one does not {\it a priori} know the relative fractions from each $\beta$). More recently, in the high dimension case when $N < k$ but the $\beta$s to be recovered are sparse, the work in \cite{Buhlmann2010} proposes changing the vanilla EM for this problem, by adding a Lasso penalty to the $\beta$ update step. For this method, and sufficient samples, they show that there exists a local minimizer which selects the correct support. This can be viewed as an interesting extension of the known fact about EM, that it has efficient local minima, to the sparse case; however there are no guarantees that any (or even several) runs of this modified EM will actually {\em find} this good local minimum.

In recent years, an interesting line of work (e.g., \cite{Hsu2012Gussian}, \cite{Anandkumar2012Tensor}) has shown the possibility of resolving latent variable models via considering spectral properties of appropriate third-order tensors. Very recent work \cite{chaganty13} applies this approach to mixed linear regression. Their method suffers from high sample complexity; in the setting of our problem, their theoretical analysis indicates $N > O(k^6)$. Additionally, this method has much higher computational complexity than the methods in our paper (both EM, and the initialization), due to the fact that they need to work with third-order tensors. 

A quite similar problem that attracts extensive attention is subspace clustering, where the goal is to learn an unknown number of linear subspaces of varying dimensions from sample points. Putting our problem in this setting, each sample $(y,\vecfont{x})$ is a vector in $\mathcal{R}^{k+1}$; the points from $\beta_1$ correspond to one $k$-dimensional subspace, and from $\beta_2$ to another $k$-dimensional subspace. Note that this makes for a very hard instance of subspace clustering, as not only are the dimensions of each subspace very high (only one less than ambient), but the projections of the points in the first $k$ coordinates are exactly the same. Even without the latter restriction, one typical method \cite{vidal2003generalized}, \cite{Ehsan2012} -- as an example -- requires $N \geq \order{k^2}$ to have unique solution. 
%In a recent paper\cite{soltanolkotabi2013robust} an algorithm based on spectral clustering is proposed and shown to work in the sparse subspace case. Their scheme does not fit our problem due to the rigorous limit of subspace dimension.

\subsection{Notation}

For matrix $X$, we use $\sigma_i(X)$ to denote the $i$th singular value of $X$. We denote the spectral, or operator, norm by $\|X\| := \max_{i} \sigma_i(X)$. For any vector $\vecfont{x}$ and scalar $p$, $\|\vecfont{x}\|_p$ is defined as the usual $\ell_p$ norm. For two vectors $\vecfont{x},\vecfont{y}$ we use $\iprod{\vecfont{x}}{\vecfont{y}}$ to denote their inner product and $\vecfont{x}\otimes\vecfont{y}$ to denote their outer product. $\vecfont{x}^T$ is transpose of $\vecfont{x}$. We define $T(\vecfont{x},\vecfont{y})$ to be the subspace spanned by $\vecfont{x}$ and $\vecfont{y}$. The operator $\mathcal{P}_{T(\vecfont{x},\vecfont{y})}$ is the orthogonal projection on $T(\vecfont{x},\vecfont{y})$. We use $N$ denote number of sample. $k$ is dimension of unknown parameters. 

\section{Algorithms}

In this section we describe the classical EM algorithm as is applied to our problem of mixed linear regression, and our new initialization procedure. Since our analytical results are currently only for the noiseless case, we focus here on EM for this setting, even though EM and also our initialization procedure easily apply to the general setting. The iterations of EM involve alternating between {\em (a)} given current $\beta_1,\beta_2$, partitioning the samples into $J_1$ (which are more likely to have come from $\beta_1$) and $J_2$ (respectively, from $\beta_2$), and then {\em (b)} updating each of $\beta_1,\beta_2$ given the new sample sets $J_1,J_2$ corresponding to each, respectively. Both parts of the iteration are extremely efficient, and can be scaled easily to large problem sizes. In the typical application, in the absence of any extraneous side information, the initial $\beta^{(0)}$'s are chosen at random.

\begin{algorithm}[h]
\caption{EM (noiseless case)}
\label{alg:altmin}
\begin{algorithmic}[1]
\INPUT Initial $\vecfont{\beta}_1^{(0)}, \vecfont{\beta}_2^{(0)}$, \# iterations $t_0$, samples $\{(y_i, \vecfont{x}_i), i =1,2,...,N\}$
\FOR{$t = 0,\cdots,t_0-1$}
\STATE \COMMENT{{\em EM Part I: Guess the labels}}
\STATE $J_1, J_2 \leftarrow \emptyset$
\FOR{$i = 1, 2, \cdots, N$}
\IF{$\left|y_i - \iprod{\vecfont{x}_i}{\vecfont{\beta}_1^{(t)}}\right| < \left|y_i - \iprod{\vecfont{x}_i}{\vecfont{\beta}_2^{(t)}}\right|$} 
\STATE	$J_1 \leftarrow J_1 \cup \{i\}$
\ELSE
\STATE	$J_2 \leftarrow J_2 \cup \{i\}$
\ENDIF
\ENDFOR
\STATE \COMMENT{\em EM Part II: Solve least squares}
\STATE $\vecfont{\beta}_1^{(t+1)} \leftarrow \argmin_{\vecfont{\beta} \in \reals^k} \twonorm{\vecfont{y}_{J_1} - \mat{X}_{J_1} \vecfont{\beta}}$
\STATE $\vecfont{\beta}_2^{(t+1)} \leftarrow \argmin_{\vecfont{\beta} \in \reals^k} \twonorm{\vecfont{y}_{J_2} - \mat{X}_{J_2} \vecfont{\beta}}$
\ENDFOR
\OUTPUT $\vecfont{\beta}_1^{(t_0)}, \vecfont{\beta}_2^{(t_0)}$
\end{algorithmic}
\end{algorithm}

It is not hard to see that each iteration of the above procedure results in a decrease in the loss function
\begin{equation}
\mathcal{L}(\beta_1,\beta_2)  :=  \sum_i \, \min_{z_i\in \{0,1\}} \, \left ( y_i - \iprod{\vecfont{x}_i}{z_i \beta_1 + (1-z_i)\beta_2}   \right )^2.
\label{eq:loss}
\end{equation}
Note that $\mathcal{L}$, being the minimum of several convex functions, is neither convex nor concave; hence, while EM is guaranteed to converge, all that can be said {\it a priori} is that it will reach a local minimum. Indeed, our hardness result in Section \ref{sec:mainresults} confirms that for general $\mathbf{x}_i$, this must be the case. Yet even for the Gaussian case we consider, this has essentially been the state of analytical understanding of EM for this problem to date; in particular there are no global guarantees on convergence to the true solutions, under any assumptions, as far as we are aware. 

The main algorithmic innovation of our paper is to develop a more principled initialization procedure. In practice, this allows for faster convergence, and with fewer samples, to the true $\beta^*_1,\beta^*_2$. Additionally, it allows us to establish global guarantees for EM, when EM is started from here. We now describe this initialization.

\subsection{Initialization} 

Our initialization procedure is based on the positive semidefinite matrix
\[
M := \frac{1}{N}\sum_{i=1}^{N}y_i^2\vecfont{x}_i\otimes\vecfont{x}_i,
\]
where $\otimes$ represents the outer product of two vectors. The main idea is that $M$ is an unbiased estimator of a matrix whose top two eigenvectors span the same space spanned by the true $\beta_1^*,\beta_2^*$. We now present the idea, and then formally describe the procedure.

{\bf Idea:} The expected value of $M$ is given by
\[
\mathbb{E}[M] ~ = ~ p_1 A_1 + p_2 A_2,
\]
where $p_1,p_2$ are the fractions of observations of $\beta_1^*,\beta_2^*$ respectively, and the matrices $A_i$, $i=1,2$, are given by
\[
A_i ~ := ~ \mathbb{E}\left [ \, \iprod{\vecfont{x}}{\beta^*_i}^2 \, \vecfont{x}\otimes \vecfont{x}\right],
\] 
where the expectation is over the random vector $\vecfont{x}$, which in our setting is uniform normal. It is not hard to see that this matrix evaluates to
\[
A_i ~ = ~ I \, +\,  2 (\beta_i^* \otimes \beta_i^*)
\]
where $I$ is the identity matrix. Thus it has $\beta_i^*$ as its leading eigenvector (with eigenvalue $1 + 2 \|\beta_i^*\|^2$), and all other eigenvalues are 1. Thus, as long as neither of the fractions $p_1$, $p_2$ are too small, the leading eigenvectors of the expectation $\mathbb{E}[M]$ will be the true vectors $\beta^*_1,\beta^*_2$. Of course, we do not have access to this expected matrix; however, note that $M$ is the sum of i.i.d. matrices, and thus one can expect that with sufficient samples $N$, the top-2 eigenspace will be a decent approximation of the space spanned by $\beta_1^* ,\beta_2^*$. 

Note however that, {\em even} for the expected matrix $\mathbb{E}[M] $, when $p_1=p_2$ and $\|\beta_1^*\| = \|\beta_2^*\|$ (the case we argue is the most pertinent and difficult) the top two eigenvectors will {\em not} be $\beta^*_1,\beta^*_2$, since these two vectors need not be orthogonal. We thus need to run a simple 1-dimensional grid search on the unit circle in this space to find good approximations to the individual vectors $\beta^*_1,\beta^*_2$, as opposed to just the space spanned by them. Our algorithm uses the empirical loss of every candidate pair, $(\hat{\beta}_1,\hat{\beta}_2)$, produced by the grid search, in order to select a good initial starting point.

The details of the above idea are given below, along with the formal description of our procedure, in Algorithm \ref{alg:init}. 
%We first make the matrix $M$, find the space spanned by its top two eigenvectors, and then do a (1-dimensional) grid search on the unit circle in this space to find estimates for the $\beta_1,\beta_2$. In particular, we evaluate the empirical loss $\mathcal{L}$, as given in eq. (\ref{eq:loss}), for each pair of the grid points, and find the first one that has the lowest loss. 

\begin{algorithm}[h]
\caption{Initialization}
\label{alg:init}
\begin{algorithmic}[1]
\INPUT Grid resolution $\delta$, samples $\{(y_i, \vecfont{x}_i), i =1,2,...,N\}$
\STATE $M \leftarrow \frac{1}{N}\sum_{i=1}^{N}y_i^2\vecfont{x}_i\otimes\vecfont{x}_i$
\STATE Compute top 2 eigenvectors $\vecfont{v}_1, \vecfont{v}_2$ of $M$
\STATE {\em \{Make the grid points\}} \\
$G \leftarrow \{\vecfont{u}:\vecfont{u} = \vecfont{v}_1\cos(\delta t) + \vecfont{v}_2\sin(\delta t), t = 0,1,...,\lceil\frac{2\pi}{\delta}\rceil\}$
\STATE {\em \{Pick the pair that has the lowest loss\} } 
\[
\vecfont{\beta}_1^{(0)},\vecfont{\beta}_2^{(0)} \leftarrow \arg \min_{\vecfont{u}_1,\vecfont{u}_2 \in G} \mathcal{L}(\vecfont{u}_1,\vecfont{u}_2)
\]
\OUTPUT $\vecfont{\beta}_1^{(0)}, \vecfont{\beta}_2^{(0)}$
%
%$\mathcal{L}(\vecfont{u}_1,\vecfont{u}_2)$
%$\vecfont{u}_1,\vecfont{u}_2$ 
%
%\STATE $\hat{W} = \{(\vecfont{u},\vecfont{v}): \vecfont{u} \neq \vecfont{v},\vecfont{u} \in W, \vecfont{v} \in W \}$
%\STATE \textbf{while}($\hat{W}$ is nonempty) 
%\STATE $\;\;$ pick $(\vecfont{u}, \vecfont{v}) $ from $\hat{W}$ 
%\STATE $\;\;$ $\hat{W} \leftarrow \hat{W}/\{(\vecfont{u}, \vecfont{v})\}$
%\STATE $\;\;$ \textbf{if} $\sqrt{\mathcal{L}(\vecfont{u},\vecfont{v})/N} < 1.1\epsilon$ \textbf{break}
%\STATE \textbf{end while} 
%\STATE run \textsl{EM} with input $(\vecfont{u}, \vecfont{v})$, $t_0$.
%\OUTPUT $\vecfont{\beta}_1^{(t_0)}, \vecfont{\beta}_2^{(t_0)}$

\end{algorithmic}
\end{algorithm}

\textbf{Choice of grid resolution $\delta$.} In section \ref{sec:mainresults}, we show that it's sufficient to choose $\delta < c\|\vecfont{\beta}_1^* - \vecfont{\beta}_2^*\|_2 \sqrt{\min\{p_1,p_2\}}^3$ for some universal constant $c$. Even we have no knowledge of gound truth, successful choice of $\delta$ relies on a conservative estimation of $\|\vecfont{\beta}_1^* - \vecfont{\beta}_2^*\|_2$ and $\min\{p_1,p_2\}$. Note that {\em this upper bound does not scale with problem size}. The number of candidate pairs is actually independent of $(k,N)$. 

\textbf{Search avoidance method using prior knowledge of proportions.} When $p_1, p_2$ are known, approximation of $\vecfont{\beta}_1^*, \vecfont{\beta}_2^*$ can be computed from the top two eigenvectors of $M$ in closed form. Suppose $(\vecfont{v}_b^*, \lambda_b^*), b = 1,2$ are eigenvectors and eigenvalues of $\mathbb{E}[(M - I)/2] $. We define 
\begin{equation*}
sign(b) = \begin{cases}
1, \; b = 1\\
-1, \; b = 2 
\end{cases}
\end{equation*}
It is easy to check that when $\lambda_1^* \ne \lambda_2^*$ (we use $-b$ to denote $\{1,2\}\setminus b$),
\begin{equation}
\label{directAppro}
\vecfont{\beta}_b^* = \sqrt{\frac{1 - \Delta_b^*}{2}}\vecfont{v}_b^* + sign(b)\sqrt{\frac{1+\Delta_{b}^*}{2}}\vecfont{v}_{-b}^*, \; b = 1,2,
\end{equation}
where
\begin{equation*}
\Delta_b^* = \frac{(\lambda_{b}^* - \lambda_{-b}^*)^2 + p_b^2 - p_{-b}^2}{2(\lambda_{-b}^* - \lambda_b^*)p_b}, \; b = 1,2.
\end{equation*}

\textbf{Duplicate eigenvalues.} $\lambda_1^* = \lambda_2^*$ if and only if $p_1 = p_2$ and $\iprod{\vecfont{\beta}_1^*}{\vecfont{\beta}_2^*} = 0$. In this case $\{\vecfont{\beta}_1^*, \vecfont{\beta}_2^*\}$ are not identifiable from spectral structure of $\mathbb{E}(M)$ because any linear combination of $\{\vecfont{\beta}_1^*, \vecfont{\beta}_2^*\}$ is an eigenvector of $\mathbb{E}(M)$. We go back to Algorithm \ref{alg:init} in this case.

Based on the above analysis, we propose an alternative initialization method using proportion information when eigenvalues are nonidentical, in Algorithm \ref{alg:init_withp}.
\begin{algorithm}[h]
\caption{Initialization with proportion information}
\label{alg:init_withp}
\begin{algorithmic}[1]
\INPUT $p_1, p_2$, samples $\{(y_i, \vecfont{x}_i), i =1,2,...,N\}$
\STATE $M \leftarrow \frac{1}{N}\sum_{i=1}^{N}y_i^2\vecfont{x}_i\otimes\vecfont{x}_i$
\STATE Compute top 2 eigenvectors and eigenvalues $(\vecfont{v}_b, \lambda_b), b = 1,2$ of $(M-I)/2$
\STATE Compute $\vecfont{\beta}_1^{(0)}, \vecfont{\beta}_2^{(0)}$ via equation (\ref{directAppro}) ( use empirical version, i.e., remove superscript $*$)
\OUTPUT $\vecfont{\beta}_1^{(0)}, \vecfont{\beta}_2^{(0)}$

\end{algorithmic}
\end{algorithm}

In Section \ref{sec:simulations}, we demonstrate empirically the importance of this initialization technique; we show that EM initialized randomly has remarkably slower performance compared to EM initialized by Algorithm \ref{alg:init}. Our theoretical results presented in Section \ref{sec:mainresults}, confirm this observation analytically.

\section{Empirical Performance}
\label{sec:simulations}
In this section, we demonstrate the behavior of our algorithm on synthetic data. The results highlight in particular two important features of our results. First, the simulations corroborate our theoretical results given in Section \ref{sec:mainresults}, which show that our algorithm is nearly optimal (unimprovable) in terms of sample complexity. Indeed, we show here that EM+SVD succeeds when given about as many samples as dimensions (in the absence of additional structure, e.g., sparsity, it is not possible to do better). Second, our results show that the SVD initialization seems to be critical: without it, EM's performance is significantly degraded.

\textbf{Setting} We generate $\vecfont{x}$ from $\mathcal{N}(0,I)$. We then choose the labels uniformly at random, i.e., we set $p_1 = p_2 = 0.5$. Also, in each trial, we generate $\beta_1^*$ and $\beta_2^*$ randomly but keep $\iprod{\beta_1^*}{\beta_2^*} = 1.73$. This constant is arbitrarily chosen here. Our goal is to make sure they are non-orthogonal. We run algorithm \ref{alg:init} with a fairly coarse grid: $\delta = 0.3$. We also test algorithm \ref{alg:init_withp} using $p_1 = p_2$. The following metric which stands for global optimality is used
\begin{equation}
{\rm err}^{(t)} := \max\{\|\beta_1^{(t)} - \beta_1^*\|_2, \|\beta_2^{(t)} - \beta_2^*\|_2\}. \label{eq:error}
\end{equation}
Here $t$ is the sequence of number of iterations. 

\textbf{Sample Complexity.} In figure \ref{samplecomplexity} we empirically investigate how the number of samples $N$ needed for exact recovery scales with the dimension $k$. Each point in Figure \ref{samplecomplexity} represents 1000 trials, and the corresponding value of $N$ is the number of samples at which the success rate was greater than 0.99. We use algorithm \ref{alg:init} for initialization. In figure \ref{phasetransition}, we show the phase transition curves with a few $(N,k)$ pairs.

\begin{figure}[ht]
\vskip 0.2in
\begin{center}
%\centerline{\includegraphics[width=\columnwidth/2]{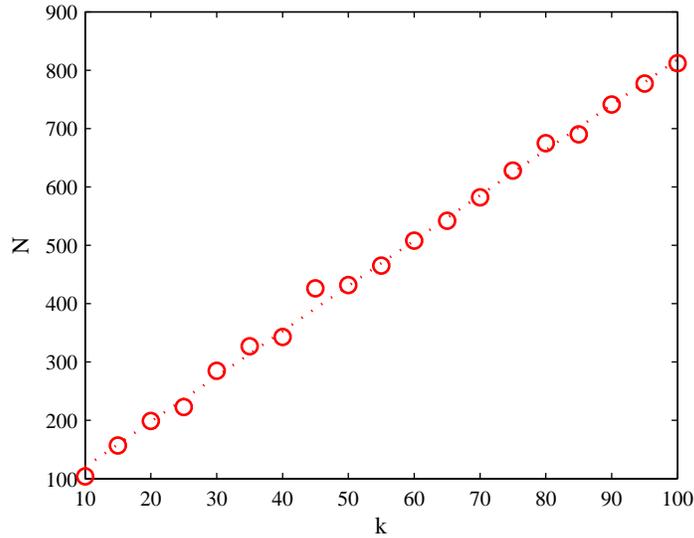}}
\centerline{\includegraphics{nsample.eps}}
\caption{Number of samples needed for success rate greater than 0.99 using SVD+EM. The dotted line is the least square fit of the experimental data.}
\label{samplecomplexity}
\end{center}
\vskip -0.2in
\end{figure}

\begin{figure}[ht]
\vskip 0.2in
\begin{center}
%\centerline{\includegraphics[width=\columnwidth/2]{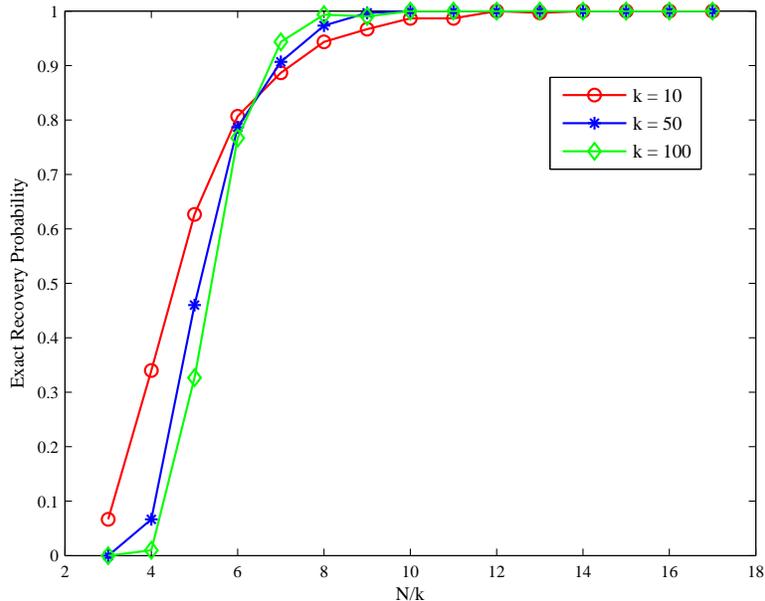}}
\centerline{\includegraphics[scale = 0.8]{phaseTran.eps}}
\caption{Success probability vs. normalized number of samples, i.e., $N/k$. }
\label{phasetransition}
\end{center}
\vskip -0.2in
\end{figure}

\textbf{Effect of Initialization.} We compare our eigenvector-based Initialization + EM with the usual randomly initialized EM.  For $N = 300$ samples and $k = 10$ dimensions, figure \ref{con1} shows how the error $err$ converges as a function of the iterations. Each curve is averaged over 200 trials.  We observe that the final error of SVD+EM is about $10^{-35}$. The level of noise results from float computation. For each trial, the blue and green curves show that exact recovery occurred after 7 iterations. This is possible since we are in the noiseless case. 

As can be clearly seen, initialization has a profound effect on the performance of EM in our setting; it allows for exact recovery with high probability in a small number of iterations, while random initialization does not. 

\begin{figure}[ht]
\vskip 0.2in
\begin{center}
%\centerline{\includegraphics[width=\columnwidth/2]{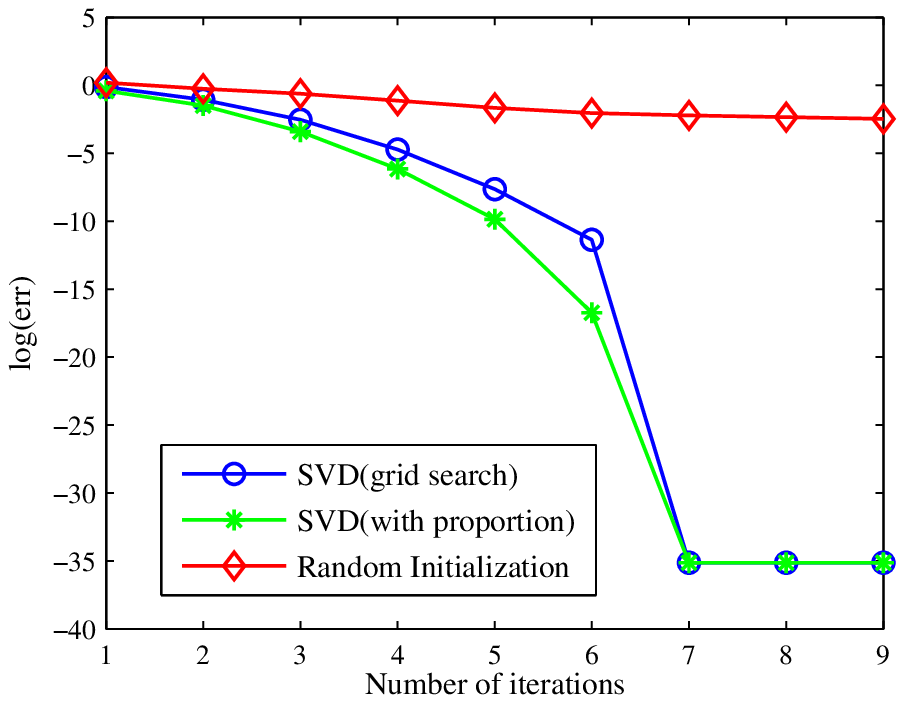}}
\centerline{\includegraphics{SVDvsRND.eps}}
\caption{This figure compares the decay in error, as a function of iteration count of EM, with and without our initialization. As can be seen, initialization allows for exact recovery (the $10^{-35}$ error is precision of Matlab) in a small number of iterations, while the standard random initialization is still not close.}
\label{con1}
\end{center}
\vskip -0.2in
\end{figure}

%We then set $w_i \sim \mathcal{N}(0,0.01)$ and run the above simulation again. Figure \ref{con2} shows SVD is an effective initial step even in noisy case. 
%
%\begin{figure}[ht]
%\vskip 0.2in
%\begin{center}
%\centerline{\includegraphics[width=\columnwidth]{SVDvsRNDnoise.eps}}
%\caption{Convergence of SVD+EM and EM with random initialization(noisy case)}
%\label{con2}
%\end{center}
%\vskip -0.2in
%\end{figure}

\section{Main Results}
\label{sec:mainresults}

In this section, we present the main results of our paper: provable statistical guarantees for EM, initialized with our Algorithm \ref{alg:init}, in solving the mixed linear regression problem. We first show that for general $\{\mathbf{x}_i\}$, the problem is NP-hard, even without noise. Then, we focus on the setting where each measurement vector $\vecfont{x}_i$ is iid and sampled from the uniform normal distribution $\mathcal{N}(0,I)$. We also assume that the true vectors $\beta_1^*,\beta_2^*$ are equal in magnitude, which without loss of generality, we assume is 1. Intuitively, equal magnitudes represents a hard case, as in this setting the $y_i$'s from the two $\beta$'s are statistically identical\footnote{In particular, each $y_i$ has mean 0, and  variance $\|\beta_1^*\|^2$ if it comes from the first vector, and $\|\beta_2^*\|^2$ if it comes from the second. Having them be equal, i.e. $\|\beta_1^*\|^2 = \|\beta_2^*\|^2$, makes the $y_i$s statistically identical.}. 

Our proof can be broken into two key results. We first show that using $O(k\log^2 k)$ samples, with high probability our initialization  procedure returns $\beta_1^{(0)},\beta_2^{(0)}$ which are within a constant distance of the true $\beta_1^*,\beta_2^*$. We note that for our scaling guarantees to hold, this constant need only be independent of the dimension, and in particular, it need not depend on the final desired precision. Results with a $1/{\rm error}$ or even $1/{\rm error}^2$ dependence -- as would be required in order for the SVD step alone to obtain an approximation of $\beta_i^*$, $i=1,2$, to within some error tolerance, are exponentially worse than what our two-step algorithm guarantees. 

We then show that, given this good initialization, at any subsequent step $t$ with current estimate $(\beta_1^{(t)},\beta_2^{(t)})$, doing one step of the EM iteration with samples that are independent of these $\beta_i^{(t)}$ results in the error decreasing by a factor of half, hence implying geometric convergence. As we explain below, our analysis providing this guarantee depends on using a new set of samples, i.e., the analysis does not allow re-use samples across iterations, as typically done in EM. We believe this is an artifact of the analysis; and of course, in practice, reusing the samples in each iteration seems to be advantageous.

Thus, our analytical results are for {\em resampled versions} of EM and the initialization scheme, which we state as Algorithms \ref{alg:rspEM} and \ref{alg:rspINIT} below. Essentially, resampling involves splitting the set of samples into disjoint sets, and using one set for each iteration of EM; otherwise the algorithm is identical to before.  Since we have geometric decrease in the error, achieving an $\epsilon$ accuracy comes at an additional cost of a factor $\log(1/\epsilon)$ in the sample complexity, as compared to what may have been possible with the non-resampled case. We then show that when $\epsilon \leq \order{1/k^2}$, the error decays to be zero with high probability. In other words, we need in total $\order{\log k}$ iterations in order to do exact recovery. Additionally, and the main contribution of this paper, the resampled version given here, represents the only known algorithm, EM or otherwise, with provable global statistical guarantees for the mixed linear regression problem, with sample complexity close to $O(k)$.

\begin{algorithm}[h]
\caption{EM with resampling}
\label{alg:rspEM}
\begin{algorithmic}[1]
\INPUT Initial $\vecfont{\beta}_1^{(0)}, \vecfont{\beta}_2^{(0)}$, \# iterations $t_0$, samples $\{(y_i, \vecfont{x}_i), i =1,2,...,N\}$
\STATE Partition the samples $\{(y_i, \vecfont{x}_i)\}$ into $t_0$ disjoint sets: $\mathcal{S}_1,...,\mathcal{S}_{t_0}$.
\FOR{$t = 1,\cdots,t_0$}
\STATE Use $\mathcal{S}_t$ to run lines \textsl{2} to \textsl{13} in algorithm \ref{alg:altmin}.
\ENDFOR
\OUTPUT $\vecfont{\beta}_1^{(t_0)}, \vecfont{\beta}_2^{(t_0)}$
\end{algorithmic}
\end{algorithm}

Similarly, in the initialization procedure, for analytical guarantees we require two separate sets of samples: one set $\mathcal{S}_{*}$ for finding the top-2 eignespace, and another set $\mathcal{S}_+$ for evaluating the loss function for grid points. % This is given below.

\begin{algorithm}[h]
\caption{Initialization with resampling}
\label{alg:rspINIT}
\begin{algorithmic}[1]
\INPUT Grid resolution $\delta$, samples $\{(y_i, \vecfont{x}_i), i =1,2,...,N\}$
\STATE Partition the samples $\{(y_i, \vecfont{x}_i)\}$ into two disjoint sets: $\mathcal{S}_{*}, \mathcal{S}_+$
\STATE $M \leftarrow \frac{1}{|\mathcal{S}_*|}\sum_{i\in \mathcal{S}_*}y_i^2\vecfont{x}_i\otimes\vecfont{x}_i$
\STATE Compute top 2 eigenvectors $\vecfont{v}_1, \vecfont{v}_2$ of $M$
\STATE {\em \{Make the grid points\}} \\
$G \leftarrow \{\vecfont{u}:\vecfont{u} = \vecfont{v}_1\cos(\delta t) + \vecfont{v}_2\sin(\delta t), t = 0,1,...,\lceil\frac{2\pi}{\delta}\rceil\}$
\STATE {\em \{Pick the pair that has the lowest loss\} } 
\[
\vecfont{\beta}_1^{(0)},\vecfont{\beta}_2^{(0)} \leftarrow \arg \min_{\vecfont{u}_1,\vecfont{u}_2 \in G} \mathcal{L}_+(\vecfont{u}_1,\vecfont{u}_2)
\]
where this loss $\mathcal{L}_+$ is evaluated as in (\ref{eq:loss}) using samples in $\mathcal{S}_+$
\OUTPUT $\vecfont{\beta}_1^{(0)}, \vecfont{\beta}_2^{(0)}$
\end{algorithmic}
\end{algorithm}

First, we provide the hardness result for the case of general $\{\mathbf{x}_i\}$.

\begin{proposition}\label{prop:hardness}
Deciding if a general instance of the mixed linear equations problem specified by $(\mathbf{y},X)$ has a solution, $\beta_1, \beta_2$, is NP-hard.
\end{proposition}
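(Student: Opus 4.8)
The plan is to prove NP-hardness by a polynomial-time reduction from Subset Sum. First I would reformulate the decision problem in the noiseless setting: a pair $(\beta_1,\beta_2)$ solves the instance $(\mathbf{y},X)$ if and only if for every $i$ at least one of $\iprod{x_i}{\beta_1} = y_i$ or $\iprod{x_i}{\beta_2} = y_i$ holds. Viewing each sample as the hyperplane $H_i = \{\beta : \iprod{x_i}{\beta} = y_i\}$ in $\mathbb{R}^k$, the question becomes whether the $N$ hyperplanes can be covered by two points, i.e.\ whether the equations can be partitioned into two simultaneously solvable subsystems. I would then turn a Subset Sum instance into exactly such a feasibility instance.

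Given positive integers $a_1,\ldots,a_n$ and target $b$, I would work in dimension $k = n$ and write $\beta_1 = (c_1,\ldots,c_n)$, $\beta_2 = (d_1,\ldots,d_n)$. For each coordinate $i$ I introduce two gadget equations sharing the measurement vector $x = e_i$ but with different responses: $\iprod{e_i}{\beta} = 0$ and $\iprod{e_i}{\beta} = a_i$. Since these two equations are mutually inconsistent (because $a_i \neq 0$), covering both forces one of $c_i, d_i$ to equal $0$ and the other to equal $a_i$. Thus the two-point-cover disjunction encodes a genuine binary choice — whether coordinate $i$ is ``assigned'' to $\beta_1$ or to $\beta_2$ — even though each individual constraint is linear. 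Finally I add a single aggregation equation $x = \mathbf{1}$, $y = b$, i.e.\ $\sum_i \beta_i = b$, giving $2n+1$ equations of polynomial bit-size.

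I would then check both directions. If a subset $S$ sums to $b$, set $c_i = a_i$ for $i \in S$ and $0$ otherwise, and $d_i = a_i$ for $i \notin S$ and $0$ otherwise; all gadget equations are covered and $\sum_i c_i = b$ covers the aggregation equation. Conversely, feasibility forces $\{c_i, d_i\} = \{0, a_i\}$ for each $i$, so $\beta_1$ and $\beta_2$ select complementary subsets $S_1$ and $S_2 = [n]\setminus S_1$, with $\sum_i c_i = \sum_{i \in S_1} a_i$ and $\sum_i d_i = \sum_{i \in S_2} a_i$. The aggregation equation is covered iff one of these complementary subsets sums to $b$, which is equivalent to the existence of any subset summing to $b$ (free choice of $S_1$). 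Hence the instance is feasible exactly when the Subset Sum instance is a yes-instance, establishing the reduction.

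I expect the main obstacle — really the single idea the proof hinges on — to be the gadget design: extracting discrete combinatorial structure (the Boolean inclusion choice of Subset Sum) from a problem whose constraints are purely linear. The resolution is to place two inconsistent linear equations on the same coordinate and let the ``at least one of $\beta_1,\beta_2$'' disjunction perform the Boolean work. Once this is set up, the rest is routine bookkeeping; the only care needed is to confirm that no degenerate feasible configuration escapes the gadget, which holds precisely because $a_i \neq 0$ rules out the possibility $0 = a_i$.
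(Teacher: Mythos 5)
Your proposal is correct and is essentially the paper's own proof: the same reduction from Subset Sum using two mutually inconsistent equations $\iprod{e_i}{\beta}=0$ and $\iprod{e_i}{\beta}=a_i$ per coordinate to force a binary assignment, plus one aggregation row $\mathbf{1}^{\top}\beta = b$. The only cosmetic difference is that the paper uses the partition variant (target $\sum_i a_i/2$) while you use a general target $b$; both are standard and the argument is otherwise identical.
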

The proof follows via a reduction from the so-called {\sc SubsetSum} problem, which is known to be NP-hard\cite{GareyJohnson}. We postpone the details to the supplemental material.

We now state two theoretical guarantees of the initialization algorithms. Recall that the error $err^{(t)}$ is as given in (\ref{eq:error}), and $p_1,p_2$ are the fractions of observations that come from $\beta_1^*,\beta_2^*$ respectively. 

The following result guarantees a good initialization (algorithm \ref{alg:rspINIT}) {\em without requiring sample complexity that depends on the final target error of the ultimate solution}. Essentially, it says that we obtain an initialization that is {\em good enough} using $O(k \log^2k)$ samples.

\begin{proposition} \label{thm:init}
Given any constant $\widehat{c} < 1/2$, with probability at least $1-c_3 k^{-2}$ Algorithm \ref{alg:rspINIT} produces an initialization $(\beta_1^{(0)},\beta_2^{(0)})$, satisfying 
\begin{align*}
err^{(0)} ~ \leq ~ \widehat{c} \, \min\{p_1,p_2\} \, \|\vecfont{\beta}_1^* - \vecfont{\beta}_2^*\|_2, 
\end{align*}
as long as we choose grid resolution $\delta \leq  \frac{2}{11} \widehat{c} \|\vecfont{\beta}_1^* - \vecfont{\beta}_2^*\|_2  \sqrt{\min\{p_1, p_2\}}^3$, and the number of samples $|\mathcal{S}_{*}|$ and $|\mathcal{S}_{+}|$ satisfy:
%Consider Algorithm \ref{alg:rspINIT}, and suppose we are given a constant $\widehat{c} < 1/2$.  If we choose grid resolution $\delta\leq  \frac{4}{11} \widehat{c} \sqrt{\min\{p_1, p_2\}}\|\vecfont{\beta}_1^* - \vecfont{\beta}_2^*\|_2$ , then there exist constants $c_1,c_2,c_3$ so that  if the number of samples satisfies
\begin{eqnarray*}
|\mathcal{S}_{*}| & \geq & c_1 \left (\frac{1}{\widetilde{\delta}}\right )^2 \, k \, \log^2 k \\
|\mathcal{S}_{+}| & \geq & \left(\frac{c_2}{\min\{p_1,p_2\}}\right) \, k,
\end{eqnarray*}
where $c_1$, $c_2$ and $c_3$ depend on $\hat{c}$ and $\min\{p_1,p_2\}$ but not on the dimension, $k$, and where
\[
\widetilde{\delta} = \frac{\delta^2}{384}(1 - \sqrt{1-4(1-\iprod{\vecfont{\beta}_1^*}{\vecfont{\beta}_2^*}^2)p_1p_2}).
\] 
%then with probability at least $1-c_3 k^{-2}$ the error of the output $\beta_1^{(0)},\beta_2^{(0)}$ will satisfy 
%\begin{align*}
%err^{(0)} ~ \leq ~ \widehat{c} \, \min\{p_1,p_2\} \, \|\vecfont{\beta}_1^* - \vecfont{\beta}_2^*\|_2. 
%\end{align*}
\end{proposition}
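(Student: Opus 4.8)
The plan is to prove the proposition in three stages: (i) show that the empirical matrix $M$ concentrates around $\E[M]$, so that its top-two eigenspace $\widehat{T}$ approximates $T(\beta_1^*,\beta_2^*)$; (ii) use this to show the grid $G$ contains a pair whose population loss is $O(\delta^2)$; and (iii) prove a matching lower bound showing that any pair far from $\{\beta_1^*,\beta_2^*\}$ has large loss, so the empirical loss minimizer must be close. Throughout write $p_{\min}:=\min\{p_1,p_2\}$.

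First I would establish a spectral concentration bound $\|M-\E[M]\|\le\widetilde{\delta}$ holding with probability $\ge 1-O(k^{-2})$ once $|\mathcal{S}_*|\gtrsim(1/\widetilde{\delta})^2\,k\log^2 k$. The summands $y_i^2\,\vecfont{x}_i\otimes\vecfont{x}_i=\iprod{\vecfont{x}_i}{\beta^*}^2\,\vecfont{x}_i\otimes\vecfont{x}_i$ are degree-four in the Gaussian $\vecfont{x}_i$, hence unbounded and heavy-tailed, so a direct matrix Bernstein bound fails; the key technical step is to truncate on the event $\{\|\vecfont{x}_i\|^2\lesssim k\log k\}$, apply a Bernstein-type inequality to the truncated, centered matrices, and control the negligible truncation bias separately. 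This is where I expect the $k\log^2 k$ factor and most of the effort to arise. Given this bound, recall from the earlier computation that $\E[M]=I+2(p_1\beta_1^*\otimes\beta_1^*+p_2\beta_2^*\otimes\beta_2^*)$ has a gap between its second and third eigenvalues equal to $1-\sqrt{1-4p_1p_2(1-\iprod{\beta_1^*}{\beta_2^*}^2)}$; a Davis--Kahan $\sin\Theta$ argument then yields $\sin\Theta(\widehat{T},T)\le\|M-\E[M]\|/\text{gap}\le\delta^2/384$, which is exactly why $\widetilde{\delta}$ carries the gap as a factor.

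Second, because all candidate vectors and both $\beta_b^*$ are unit vectors, and the grid is $\delta$-dense on the unit circle of $\widehat{T}$ while $\sin\Theta\lesssim\delta^2\ll\delta$, for each $b$ there is a grid point $u_b^\circ$ with $\|u_b^\circ-\beta_b^*\|\lesssim\delta$; since $\E[\iprod{\vecfont{x}}{v}^2]=\|v\|^2$, the population loss of $(u_1^\circ,u_2^\circ)$ is $\lesssim\delta^2$. Third, for the lower bound I would use that a sample from $\beta_b^*$ contributes per-sample loss $\min\{\iprod{\vecfont{x}}{\beta_b^*-u_1}^2,\iprod{\vecfont{x}}{\beta_b^*-u_2}^2\}$, and establish a bivariate-Gaussian inequality $\E[\min\{\iprod{\vecfont{x}}{a}^2,\iprod{\vecfont{x}}{b}^2\}]\ge c_0\min\{\|a\|^2,\|b\|^2\}$ for a universal $c_0$. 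Summing over the two classes gives population loss at least $c_0\big(p_1\min_j\|\beta_1^*-u_j\|^2+p_2\min_j\|\beta_2^*-u_j\|^2\big)$. A short case analysis then shows any ``collapsed'' pair (both $\beta_1^*,\beta_2^*$ matched to the same candidate) incurs loss $\gtrsim p_{\min}\|\beta_1^*-\beta_2^*\|^2$ via the triangle inequality, which, under the hypothesis $\delta\lesssim\hat{c}\,\|\beta_1^*-\beta_2^*\|\,p_{\min}^{3/2}$, strictly exceeds the $O(\delta^2)$ loss of $(u_1^\circ,u_2^\circ)$; hence the minimizer must be ``matched,'' and for matched pairs the bound reads loss $\ge c_0\,p_{\min}\,(err^{(0)})^2$.

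Finally I would close the loop with a uniform concentration of the empirical loss $\mathcal{L}_+$ to its population value over the $O(1/\delta^2)$ grid pairs: each per-sample term is a sub-exponential function of $\vecfont{x}_i$, so a Bernstein bound together with a union bound over the dimension-independent number of pairs gives, for $|\mathcal{S}_+|\gtrsim k/p_{\min}$, that empirical and population losses agree closely with probability $\ge 1-O(k^{-2})$. Combining the pieces: the returned pair has empirical loss no larger than that of $(u_1^\circ,u_2^\circ)$, hence population loss $\lesssim\delta^2$, hence (being matched) $c_0\,p_{\min}\,(err^{(0)})^2\lesssim\delta^2$, so $err^{(0)}\lesssim\delta/\sqrt{p_{\min}}\le\hat{c}\,p_{\min}\,\|\beta_1^*-\beta_2^*\|$ after substituting the prescribed $\delta$ and tracking the explicit constants ($2/11$ and $384$). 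The main obstacle is the heavy-tailed matrix concentration of stage one; the bivariate-Gaussian lower bound involving the $\min$ operator in stage three is the second delicate point.
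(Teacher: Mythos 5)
Your proposal follows essentially the same route as the paper: concentration of $M$ via truncation plus a Bernstein-type matrix bound, an eigenspace-perturbation argument whose denominator is the gap $1-\sqrt{1-4p_1p_2(1-\iprod{\vecfont{\beta}_1^*}{\vecfont{\beta}_2^*}^2)}$ to place a grid pair within $O(\delta)$ of $(\vecfont{\beta}_1^*,\vecfont{\beta}_2^*)$ (the paper's Lemma \ref{lem:concentration}), two-sided bounds relating the loss of a candidate pair to its error with the ``collapsed pair'' case handled by the triangle inequality (the paper's Lemma \ref{sim}, whose lower bound is exactly your $\E[\min\{\iprod{\vecfont{x}}{a}^2,\iprod{\vecfont{x}}{b}^2\}]\gtrsim\min\{\|a\|^2,\|b\|^2\}$ estimate, obtained there via Lemma \ref{cd}), and the final comparison through the empirical minimizer. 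The only cosmetic difference is that you pass through the population loss plus uniform concentration over the grid, whereas the paper bounds the empirical loss directly per candidate pair; this changes nothing substantive.
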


%We note that while the grid resolution depends on $\min\{p_1,p_2\}$, as discussed above, the most difficult instance of the mixed regression problem is when the labels are nearly balanced, i.e., $p_1 \approx p_2$; for otherwise, assigning all labels to be identical would already be a good initialization of the problem. 

Algorithm \ref{alg:init_withp} can be analyzed without resampling argument. The input sample set is $\mathcal{S}_{*}$, we have the following conclusion.
\begin{proposition} \label{prop:initwp}
Consider initialization method in algorithm \ref{alg:init_withp}. Given any constant $\widehat{c} < 1/2$, with probablity at least $1 - \frac{1}{k^2}$, the approach produces an initialization $(\beta_1^{(0)},\beta_2^{(0)})$ satisfying
\begin{align*}
err^{(0)} ~ \leq ~ \widehat{c} \, \min\{p_1,p_2\} \, \|\vecfont{\beta}_1^* - \vecfont{\beta}_2^*\|_2, 
\end{align*}
if 
\begin{eqnarray*}
|\mathcal{S}_{*}| & \geq & c_1 \left (\frac{1}{\widetilde{\delta}}\right )^2 \, k \, \log^2 k. 
\end{eqnarray*}
Here $c_1$ is a constant that depends on $\widehat{c}$. And
\[
\sqrt{\widetilde{\delta}} = \widehat{c}\sqrt{\min\{p_1,p_2\}}^3\|\beta_1^* - \beta_2^*\|_2(\sqrt{1 - \kappa})\kappa,
\]
where $\kappa = \sqrt{1-4(1-\iprod{\vecfont{\beta}_1^*}{\vecfont{\beta}_2^*}^2)p_1p_2}$.
\end{proposition}

Comparing the obtained upper bound of $\widetilde{\delta}$ with that in proposition \ref{thm:init}, we note there is an additional $\kappa$ factor. Actually, $\kappa$ represents the gap between top two eigenvectors of $\mathbb{E}(M)$. This factor characterizes the hardness of identifying two vectors from search avoiding method.  

The proofs of proposition \ref{thm:init} and \ref{prop:initwp} relies on standard concentration results and eigenspace perturbation analysis. We postpone the details to supplemental materials.

The main theorem of the paper guarantees geometric decay of error, assuming a good initialization. Essentially, this says that to achieve error less than $\epsilon$, we need $\log (1/\epsilon)$ iterations, each using $O(k)$ samples. Again, we note the absence of higher order dependence on the dimension, $k$, or anything other than the mild dependence on the final error tolerance, $\epsilon$.

\begin{theorem} \label{thm:em}
Consider one iteration in algorithm \ref{alg:rspEM}. For fixed $(\beta_1^{(t-1)},\beta_2^{(t-1)})$, there exist absolute constants $\widetilde{c},c_1,c_2$ such that if
\begin{equation*}
\label{cond_err}
  err^{(t-1)} ~ \leq  ~ \widetilde{c} \, \min \{p_1,p_2\} \,  \|\mathbf{\beta}_1^* - \mathbf{\beta}_2^*\|_2,
\end{equation*}
and if the number of samples in that iteration satisfies
\[
|\mathcal{S}_t| ~ \geq ~ \left( \frac{c_1}{ \min \{p_1,p_2\}} \right) \, k,
\]
then with probability greater than $1-\exp(-c_2 k)$ we have a geometric decrease in the error at the next stage, i.e. 
\begin{equation*}
 err^{(t)} \leq \frac{1}{2} err^{(t-1)}
\end{equation*}
\end{theorem}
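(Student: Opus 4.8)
The plan is to analyze a single EM iteration by conditioning on the true labels $z_i$ and on the fixed previous estimate $(\beta_1^{(t-1)},\beta_2^{(t-1)})$, and then writing the least-squares update in closed form. The point that makes concentration applicable despite the data-dependent partition is that the label-guessing rule in Algorithm \ref{alg:altmin} tests each sample $i$ separately (it compares $|y_i-\langle x_i,\beta_1^{(t-1)}\rangle|$ with $|y_i-\langle x_i,\beta_2^{(t-1)}\rangle|$), so the event $\{i\in J_b\}$ depends only on $(x_i,z_i)$; after conditioning on the $z_i$, the summands indexed by $i$ remain mutually independent. Writing $\Delta_b := \beta_b^{(t-1)}-\beta_b^*$ (so $\|\Delta_b\|\le err^{(t-1)}=:d$), $A_b := \sum_{i\in J_b} x_i x_i^T$, and letting $B_b$ collect the terms $x_i x_i^T$ over those samples in $J_b$ that were actually generated by $\beta_{-b}^*$, the noiseless relation $y_i x_i = (x_i x_i^T)\beta^*_{(i)}$ (with $\beta^*_{(i)}$ the generating vector of sample $i$) yields the exact identity $\beta_b^{(t)}-\beta_b^* = -A_b^{-1}B_b(\beta_b^*-\beta_{-b}^*)$. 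It therefore suffices to show $\|A_b^{-1}\|\,\|B_b(\beta_b^*-\beta_{-b}^*)\|\le \tfrac12 d$ for each $b\in\{1,2\}$.

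Two estimates drive the argument. First, $J_b$ consists overwhelmingly of correctly classified samples drawn from $\beta_b^*$, for which membership in $J_b$ is a near-certain event, so the induced conditioning perturbs the standard Gaussian design only mildly; standard random-matrix bounds then give $\sigma_{\min}(A_b)\gtrsim p_b |\mathcal{S}_t|$ with probability $1-\exp(-\Omega(k))$, using $|J_b|\gtrsim p_b|\mathcal{S}_t|\gtrsim k$ guaranteed by the sample-size hypothesis; hence $\|A_b^{-1}\|\lesssim 1/(p_b|\mathcal{S}_t|)$. Second, and this is the crux, I bound $\|B_b(\beta_b^*-\beta_{-b}^*)\|$. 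The key geometric observation is that a sample from $\beta_{-b}^*$ can be misclassified into $J_b$ only when $|\langle x_i,\beta_1^*-\beta_2^*\rangle| < |\langle x_i,\Delta_1\rangle| + |\langle x_i,\Delta_2\rangle|$, which follows by substituting $y_i=\langle x_i,\beta_{-b}^*\rangle$ into the comparison test and applying the triangle inequality. This has two consequences, each contributing a factor of $d$: the misclassification event has probability only $O(d/\|\beta_1^*-\beta_2^*\|)$, and on that event the scalar $\langle x_i,\beta_b^*-\beta_{-b}^*\rangle$ multiplying $x_i$ inside $B_b(\beta_b^*-\beta_{-b}^*)=\sum_i \langle x_i,\beta_b^*-\beta_{-b}^*\rangle x_i$ is itself $O(d)$. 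Computing the conditional expectation over the (anisotropic) misclassification region then yields $\|\E[B_b(\beta_b^*-\beta_{-b}^*)]\|\lesssim p_{-b}|\mathcal{S}_t|\,d^2/\|\beta_1^*-\beta_2^*\|$.

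Adding a vector-Bernstein bound for the deviation of $B_b(\beta_b^*-\beta_{-b}^*)$ about this mean — where the hypothesis $|\mathcal{S}_t|\gtrsim k/\min\{p_1,p_2\}$ is exactly what forces the deviation to be comparable to or smaller than the mean with probability $1-\exp(-\Omega(k))$ — and combining with the lower bound on $\sigma_{\min}(A_b)$, gives $\|\beta_b^{(t)}-\beta_b^*\| \lesssim \frac{p_{-b}}{p_b}\,\frac{d^2}{\|\beta_1^*-\beta_2^*\|}$. The hypothesis $d\le \tilde c\min\{p_1,p_2\}\|\beta_1^*-\beta_2^*\|$ converts one factor of $d$ into $\tilde c\min\{p_1,p_2\}\|\beta_1^*-\beta_2^*\|$, and the elementary inequality $\frac{p_{-b}}{p_b}\min\{p_1,p_2\}\le 1$ absorbs the proportion dependence, leaving $\|\beta_b^{(t)}-\beta_b^*\|\le C\tilde c\, d$ for an absolute constant $C$; choosing $\tilde c$ small makes this at most $\tfrac12 d$. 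A union bound over $b\in\{1,2\}$ and the finitely many concentration events produces the stated failure probability $\exp(-c_2 k)$, and taking the max over $b$ gives $err^{(t)}\le\tfrac12 err^{(t-1)}$. I expect the main obstacle to be the second estimate: controlling $\|B_b(\beta_b^*-\beta_{-b}^*)\|$ with the correct quadratic-in-$d$ scaling and with absolute, proportion-free constants. The subtlety is that the misclassified samples live on a data-dependent region that is nearly orthogonal to $\beta_1^*-\beta_2^*$, so both the conditional-expectation integral and the Bernstein variance proxy must be evaluated on that region rather than under the plain Gaussian; obtaining the $d^2$ bound rather than a weaker $d^{3/2}$ bound (which would force $\tilde c$ to depend on $p_1,p_2$) is precisely what makes the region-restricted moment computation delicate.
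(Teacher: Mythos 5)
Your proposal is correct and follows essentially the same route as the paper: the same closed-form identity $\beta_b^{+}-\beta_b^*=(X^{\top}WX)^{-1}X^{\top}(WW^*-W)X(\beta_1^*-\beta_2^*)$, the same lower bound on the Gram matrix over the correctly classified samples, and the same quadratic-in-$d$ control of the misclassified cross term via the $O(d/\|\beta_1^*-\beta_2^*\|)$ misclassification probability (the paper's Lemma \ref{cd}). The only cosmetic difference is in finishing the cross-term bound: where you propose a region-restricted conditional-expectation plus vector-Bernstein argument (the step you flag as delicate), the paper sidesteps it with the deterministic pointwise inequality $(x_i^{\top}(\beta_1^*-\beta_2^*))^2\le 2(x_i^{\top}(\beta_1^*-\beta_1))^2+2(x_i^{\top}(\beta_2^*-\beta_2))^2$ implied by the labeling rule, combined with $B^2\le\sigma_{\max}(Q)\,(\beta_1^*-\beta_2^*)^{\top}Q(\beta_1^*-\beta_2^*)$ and an operator-norm bound on $Q$.
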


Note that the decrease factor $1/2$ is arbitrarily chosen here. To put the above results together, we choose the constant $\widehat{c}$ in proposition \ref{thm:init} and \ref{prop:initwp} to be less than the constant $\widetilde{c}$ in Theorem \ref{thm:em}. Then, in each iteration of alternating minimization, with $\order{k}$ fresh samples, the error decays geometrically by a constant factor with probability greater than $1 - \exp{-ck}$. Suppose we are satisfied with error level $\epsilon$, resampling regime requires $\order{k\log^2k + k\log(1/\epsilon)}$ number of samples. 

Let $J_b^*$ denote the set of samples generated from $\vecfont{\beta}_b^*, b =1,2$. It's not hard to observe that in noiseless case, exact recovery occurs when $J_b = J_b^*$.  The next result shows that when $\epsilon < \frac{c}{k^2}\|\vecfont{\beta}_1^* - \vecfont{\beta}_2^*\|_2$, fresh $\Theta(k)$ samples will be clustered correctly which results in exact recovery.

\begin{proposition}(Exact Recovery)\label{prop:exactReco}
There exist absolute constants $c_1,c_2$ such that if 
\[
err^{(t-1)} \leq \frac{c_1}{k^2}\|\vecfont{\beta}_1^* - \vecfont{\beta}_2^* \|_2
\]
and 
\[
\frac{1}{\min\{p_1,p_2\}}k < |\mathcal{S}_t| < c_2 k,
\]
then with probability greater than $1 - \frac{1}{k}$,
\[
err^{(t)} = 0.
\]
\end{proposition}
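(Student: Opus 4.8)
The plan is to reduce exact recovery to \emph{exact label recovery}: I will show that under the stated smallness of $err^{(t-1)}$, the EM classification step applied to the fresh samples $\mathcal{S}_t$ recovers the true partition, i.e. $J_1 = J_1^*$ and $J_2 = J_2^*$, and then argue that in the noiseless regime the least-squares step on a correctly labelled cluster returns $\beta_b^*$ exactly. The second half is immediate once clustering is correct: for $i \in J_b^*$ we have $y_i = \langle x_i, \beta_b^*\rangle$ exactly, so $y_{J_b} = X_{J_b}\beta_b^*$; provided the Gaussian design $X_{J_b}$ has full column rank, the minimizer of $\|y_{J_b} - X_{J_b}\beta\|_2$ is the unique vector $\beta_b^*$, giving $err^{(t)} = 0$. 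Full column rank holds almost surely as soon as each cluster has at least $k$ rows, which is where the lower bound $|\mathcal{S}_t| > k/\min\{p_1,p_2\}$ enters: it forces $\mathbb{E}|J_b^*| = p_b|\mathcal{S}_t| > k$, and a Chernoff bound then gives $|J_b^*| \geq k$ for $b=1,2$ with failure probability exponentially small in $k$.

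The crux is the per-sample misclassification probability. Fix $i \in \mathcal{S}_t$ coming from $\beta_1^*$ (the symmetric case is identical), and set $a := \beta_1^* - \beta_1^{(t-1)}$ and $b := \beta_1^* - \beta_2^{(t-1)}$. Because the samples are resampled, $x_i$ is independent of $(\beta_1^{(t-1)},\beta_2^{(t-1)})$, so conditionally $a,b$ are fixed vectors and $(\langle x_i,a\rangle, \langle x_i, b\rangle)$ is jointly Gaussian. The sample is misclassified precisely when $|\langle x_i, a\rangle| \geq |\langle x_i, b\rangle|$. Here $\|a\| \leq err^{(t-1)} \leq (c_1/k^2)\|\beta_1^* - \beta_2^*\|_2$ is tiny, while $\|b\| \geq \|\beta_1^* - \beta_2^*\|_2 - err^{(t-1)} \gtrsim \|\beta_1^* - \beta_2^*\|_2$. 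Decomposing $x_i$ along $\widehat{b} = b/\|b\|$ and its orthogonal complement, I would write $\langle x_i,b\rangle = \|b\|\,g$ with $g\sim\mathcal{N}(0,1)$ and $\langle x_i,a\rangle = \langle a,\widehat{b}\rangle\, g + Z$, where $Z$ is a centered Gaussian of variance at most $\|a\|^2$ independent of $g$. Misclassification then forces $|g|$ to be small (of order $\|a\|/\|b\|$) unless $|Z|$ is atypically large; integrating the Gaussian anti-concentration $\mathsf{Pr}[|g| \leq t]\leq t$ against the exponentially small tail of $Z$ yields
\[
\mathsf{Pr}\big[\,|\langle x_i,a\rangle| \geq |\langle x_i, b\rangle|\,\big] ~=~ O\!\left(\frac{\|a\|}{\|b\|}\right) ~=~ O\!\left(\frac{c_1}{k^2}\right).
\]

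Finally I would close with a union bound. Since there are $|\mathcal{S}_t| < c_2 k$ fresh samples, the probability that any one of them is misclassified is at most $c_2 k \cdot O(c_1/k^2) = O(c_1 c_2 / k)$; choosing the absolute constants $c_1,c_2$ small enough makes this at most $1/k$ after also absorbing the (exponentially small) probabilities that some cluster fails to reach $k$ rows or fails to be full rank. On the complement of this event we have simultaneously $J_b = J_b^*$ and $X_{J_b}$ of full column rank for $b=1,2$, hence $err^{(t)} = 0$ with probability at least $1 - 1/k$, as claimed.

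The main obstacle is the anti-concentration estimate in the second paragraph: the bound must be genuinely $O(\|a\|/\|b\|)$ rather than merely $o(1)$, since only a $1/k^2$ per-sample rate survives the union bound over $\Theta(k)$ samples to produce the final $1/k$. It is precisely this rate that pins down both the $1/k^2$ scaling of the required error $err^{(t-1)}$ and the upper bound $|\mathcal{S}_t| < c_2 k$ on the number of samples used in this last step; the resampling (hence independence of $x_i$ from the current estimates) is what makes the joint Gaussianity available and is therefore essential to the argument.
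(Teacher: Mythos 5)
Your proposal is correct and follows essentially the same route as the paper's proof: reduce exact recovery to exact label recovery by bounding the per-sample misclassification probability by $O\left(err^{(t-1)}/\|\vecfont{\beta}_1^* - \vecfont{\beta}_2^*\|_2\right) = O(1/k^2)$, take a union bound over the $O(k)$ fresh samples, and observe that in the noiseless setting least squares on a correctly labelled cluster with at least $k$ (generic Gaussian) rows returns the true vector exactly. The only difference is cosmetic: you rederive the $\|a\|/\|b\|$ misclassification bound via conditioning and Gaussian anti-concentration, whereas the paper pulls the same estimate directly from part (2) of its Lemma \ref{cd}, which is proved by a two-dimensional polar-angle computation.
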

By setting $\epsilon = \order{1/k^2}$, it turns out that exact recovery needs totally $\order{k\log^2k}$ samples. On using alternating minimization, approximation error will decay geometrically in the first place. Then when error hits some level, exact recovery occurs and the ground truth is found. Simulation results in figure \ref{con1} supports our conclusion.

\section{Proofs}
In this section, we provide the proofs of our two main results: we first show that the initializations produces an initial starting point $(\hat{\beta}_1^{(0)},\hat{\beta}_2^{(0)})$ that is within constant distance away from the truth (proposistions \ref{thm:init} and \ref{prop:initwp}). We then show that with a good starting point, EM exhibits geometric convergence, reducing the error by a factor of $2$ at each iteration (theorem \ref{thm:init}).

We postpone the proofs of proposition \ref{prop:hardness}, proposition \ref{prop:exactReco} and a few technical supporting lemmas to the appendix.

\subsection{Proof of Proposition \ref{thm:init}}
To show that our SVD initialization produces a good initial solution, requires two steps. Recall that Algorithm \ref{alg:rspINIT} finds the two dimensional subspace spanned by the top two eigenvectors of the matrix $M = \frac{1}{|\mathcal{S}_*|} \sum_{i\in \mathcal{S}_*} y_i^2 \mathbf{x}_i \otimes \mathbf{x}_i$, and then searches on a discretization of the circle in that subspace for two vectors that minimize the loss function, $\mathcal{L}_+$ evaluated on the samples in $\mathcal{S}_+$. 

We first show that the top eigenspace of $M$ is indeed close to the top eigenspace of its expectation, $p_1\vecfont{\beta}_1^*\otimes \vecfont{\beta}_1^* + p_2\vecfont{\beta}_2^*\otimes \vecfont{\beta}_2^* + I$, i.e., it is close to ${\rm span}\{\beta_1^*,\beta_2^*\}$, and that some pair of elements of the discretization are close to $(\beta_1^*,\beta_2^*)$. This is the content of lemma \ref{lem:concentration}. We then show that our loss function $\mathcal{L}_+$ is able to select good points from the discretization.

%First, we must show that the
%We have show the expectation of $M$ is $p_1\vecfont{\beta}_1^*\otimes \vecfont{\beta}_1^* + p_2\vecfont{\beta}_2^*\otimes \vecfont{\beta}_2^* + I$. We first present a result that shows it's highly possible to find good approximate vectors ($\epsilon$ away from the truth) from grid points $W$.

Our algorithm then uses the loss function $\mathcal{L}_+$ (evaluated on new samples in $\mathcal{S}_+$) to select good points from the grid $G$. Lemma \ref{sim} shows that as long as the number $\mathcal{S}_+$ of these new samples is large enough, we can upper {\em and} lower bound, with high probability, the empirically evaluated loss $\mathcal{L}_+(\hat{\vecfont{\beta}}_1,\hat{\vecfont{\beta}}_2)$ of any candidate pair $\hat{\vecfont{\beta}}_1,\hat{\vecfont{\beta}}_2$ by the true error {\rm err} of that candidate pair. This provides the critical result allowing us to do the correct selection in the 1-d search phase.

Now we are ready to prove the result. Suppose the conditions of lemma \ref{lem:concentration} hold. Then we are guaranteed the existence of $(\bar{\beta}_1,\bar{\beta}_2)$ in the grid $G$ with $\delta$-resolution, such that $\max_i \|\bar{\beta}_i - \beta_i^*\| < \delta$. Next, let $(\vecfont{\beta}_1^{(0)}, \vecfont{\beta}_2^{(0)})$ be the output of our SVD initialization, and let ${\rm err}$ denote their distance from $(\beta_1^*,\beta_2^*)$. By definition, the vectors $(\vecfont{\beta}_1^{(0)}, \vecfont{\beta}_2^{(0)})$ minimize the loss function $\mathcal{L}_+$ taken on inputs $\mathcal{S}_+$, and hence $\mathcal{L}_+(\vecfont{\beta}_1^{(0)}, \vecfont{\beta}_2^{(0)}) \leq \mathcal{L}_+(\bar{\beta}_1,\bar{\beta}_2)$. Using the lower bound from lemma \ref{sim}, applied to $(\vecfont{\beta}_1^{(0)}, \vecfont{\beta}_2^{(0)})$ we have:
\begin{equation*}
\frac{1}{5}\sqrt{\min\{p_1,p_2\}}{\rm err} \leq \sqrt{\frac{\mathcal{L}_+(\vecfont{\beta}_1^{(0)}, \vecfont{\beta}_2^{(0)})}{|\mathcal{S}_+|}}.
\end{equation*}
From the upper bound applied to $(\bar{\beta}_1,\bar{\beta}_2)$, we have
\begin{equation*}
\sqrt{\frac{\mathcal{L}_+(\bar{\beta}_1,\bar{\beta}_2)}{|\mathcal{S}_+|}} \leq 1.1 \delta.
\end{equation*}
Recalling that $\mathcal{L}_+(\vecfont{\beta}_1^{(0)}, \vecfont{\beta}_2^{(0)}) \leq \mathcal{L}_+(\bar{\beta}_1,\bar{\beta}_2)$, and taking 
\begin{equation*}
\delta \leq \frac{2}{11}\widehat{c}\|\vecfont{\beta}_1^* - \vecfont{\beta}_2^*\|_2\sqrt{\min\{p_1,p_2\}}^3,
\end{equation*}
we combine to finally obtain:
\begin{align*}
{\rm err} &\leq \frac{11}{2}\frac{\delta}{\sqrt{\min\{p_1,p_2\}}} \\
& \leq \widehat{c} \min\{p_1,p_2\}\|\vecfont{\beta}_1^* - \vecfont{\beta}_2^*\|_2.
\end{align*}
where $\widehat{c}$ is as in the statement of proposition \ref{thm:init}.

\subsection{Proof of Proposition \ref{prop:initwp}}
Using standard concentration results, in lemma \ref{lem:concentration}, we have shown if 
\[
|\mathcal{S}_*| > c(1/\widetilde{\delta})^2k\log^2k,
\]
with probability at least $1 - \frac{1}{k^2}$,
\[
\|M - \mathbb{E}(M)\| < 3\widetilde{\delta}
\]
Hence, we have 
\[
\big| |\lambda_1^* - \lambda_2^*| - |\lambda_1 - \lambda_2| \big| \leq 6\widetilde{\delta}.
\]
The approximate error of $\Delta_b^*$ can be bounded as:
\begin{align*}
2p_b|\Delta_b^* - \Delta_b| & \leq 6\widetilde{\delta} + (p_b^2 - p_{-b}^2)[\frac{1}{\lambda_{-b}^* - \lambda_b^*} - \frac{1}{\lambda_{-b} - \lambda_b}] \\
& \leq 6\widetilde{\delta} + |p_b^2 - p_{-b}^2| \frac{6\widetilde{\delta}}{(\lambda_{-b}^* - \lambda_b^*)(\lambda_{-b} - \lambda_b)} \\
& \leq 6\widetilde{\delta} + |p_b^2 - p_{-b}^2| \frac{6\widetilde{\delta}}{|\lambda_{-b}^* - \lambda_b^*|(|\lambda_{-b}^* - \lambda_b^*| - 6\widetilde{\delta})} \\
& \leq 6\widetilde{\delta} + |p_b^2 - p_{-b}^2| \frac{12\widetilde{\delta}}{|\lambda_{-b}^* - \lambda_b^*|^2} \\
\end{align*}
In the last inequality we use $\widetilde{\delta} \leq \frac{|\lambda_1^* - \lambda_2^*|}{12}$.

Next, we calculate approximation error of eigenvectors. Note that $\mathbb{E}(\frac{M - I}{2}) = p_1 \vecfont{\beta}_1^* \otimes \vecfont{\beta}_1^* + p_2 \vecfont{\beta}^*_2 \otimes \vecfont{\beta}^*_2$, we have
\[
\{\lambda_1^*, \lambda_2^*\} = \{\frac{1+\kappa}{2}, \frac{1-\kappa}{2} \}.
\]
Using lemma \ref{MatBound}, we have,
\[
\|\vecfont{v}_b - \vecfont{v}_b^*\|_2^2 \leq \frac{6\widetilde{\delta}}{\kappa} + \frac{24\widetilde{\delta}}{1 - \kappa} \leq \frac{24\widetilde{\delta}}{\kappa(1 - \kappa)}, \; b = 1,2.
\]
Then 
\begin{equation}
\label{errV}
\|\vecfont{\beta}_b^* - \vecfont{\beta}_b\|_2 \leq \bigg| \sqrt{\frac{1 - \Delta_b^*}{2}}\vecfont{v}_b^* - \sqrt{\frac{1 - \Delta_b}{2}}\vecfont{v}_b\bigg| + \bigg| \sqrt{\frac{1 + \Delta_{b}^*}{2}}\vecfont{v}_{-b}^* - \sqrt{\frac{1 + \Delta_{b}}{2}}\vecfont{v}_{-b} \bigg|.
\end{equation}
Note that
\begin{align*}
\bigg| \sqrt{\frac{1 - \Delta_b^*}{2}}\vecfont{v}_b^* - \sqrt{\frac{1 - \Delta_b}{2}}\vecfont{v}_b\bigg| & = \sqrt{\frac{1 - \Delta_b^*}{2}}\vecfont{v}_b^* - \sqrt{\frac{1 - \Delta_b^*}{2}}\vecfont{v}_b + \sqrt{\frac{1 - \Delta_b^*}{2}}\vecfont{v}_b - \sqrt{\frac{1 - \Delta_b}{2}}\vecfont{v}_b\bigg| \\
& \leq \sqrt{\frac{1 - \Delta_b^*}{2}} \|\vecfont{v}_b - \vecfont{v}_b^*\|_2 + \bigg|\sqrt{\frac{1 - \Delta_b^*}{2}} - \sqrt{\frac{1 - \Delta_b}{2}} \bigg| \|\vecfont{v}_b\|_2 \\
& \leq \|\vecfont{v}_b - \vecfont{v}_b^*\|_2 + \bigg|\sqrt{\frac{1 - \Delta_b^*}{2}} - \sqrt{\frac{1 - \Delta_b}{2}} \bigg| \\
& \leq \|\vecfont{v}_b - \vecfont{v}_b^*\|_2 + \sqrt{\frac{1}{2}\bigg| \Delta_b - \Delta_b^*\bigg|}.
\end{align*}
Plug the above result back to (\ref{errV}), we obtain
\begin{align*}
\|\vecfont{\beta}_b^* - \vecfont{\beta}_b\|_2 & \lesssim  \sqrt{\big| \Delta_b - \Delta_b^* \big|}  + \sum_{b} \|\vecfont{v}_b - \vecfont{v}_b^*\|_2 \\
& \lesssim \sqrt{\frac{\widetilde{\delta}}{\kappa(1 - \kappa)}} + \frac{1}{\sqrt{\min\{p_1,p_2\}}} \sqrt{\widetilde{\delta} + \frac{\widetilde{\delta}}{\kappa^2}} \\
& \lesssim \sqrt{\frac{\widetilde{\delta}}{\min\{p_1,p_2\}}} \times \sqrt{\frac{1}{\kappa(1 - \kappa)} + \frac{1}{\kappa^2}}  \\
& = \sqrt{\frac{\widetilde{\delta}}{\min\{p_1,p_2\}}} \frac{1}{\kappa\sqrt{1 - \kappa}}.
\end{align*}
By setting the above upper bound to be less than $\widehat{c}\min\{p_1,p_2\}\|\vecfont{\beta}_1^* - \vecfont{\beta}_2^*\|_2$, we complete the proof.

\subsection{Proof of Theorem \ref{thm:em}}
The following lemma is crucial.
\begin{lemma} \label{cd}
Assume $\mathbf{x} \in \mathbb{R}^k$ is a standard normal random vector. Let $u,v$ be two fixed vectors in $\mathbb{R}^k$. Define $\alpha_{(u,v)}:=\cos^{-1}\frac{(v-u)^{\top}(v+u)}{\|u+v\|_2\|u-v\|_2}$, $\alpha_{(u,v)} \in [0,\pi] $. Let $\Sigma = \mathbb{E}(\mathbf{x}\mathbf{x}^{\top}|(\mathbf{x}^{\top}u)^2 > (\mathbf{x}^{\top}v)^2 )$. Then,

(1)
\begin{equation} 
\sigma_{\max}(\Sigma ) = 1 + \frac{\sin\alpha_{(u,v)}}{\alpha_{(u,v)}},
\end{equation}
\begin{equation} 
\sigma_{\min}(\Sigma ) = 1 - \frac{\sin\alpha_{(u,v)}}{\alpha_{(u,v)}},
\end{equation}

(2)
\begin{equation}
\prob{(\mathbf{x}^{\top}u)^2 > (\mathbf{x}^{\top}v)^2} \left\{ 
\begin{aligned}
> & \frac{1}{2} &  \: \|u\|_2 > \|v\|_2 \\
\leq &  \frac{\|u\|_2}{\|v\|_2} &  \: \|u\|_2 < \|v\|_2
\end{aligned} 
\right.
\end{equation}
\end{lemma}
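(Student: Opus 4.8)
The plan is to reduce everything to a two-dimensional Gaussian computation. The first step is to rewrite the conditioning event: since $(\mathbf{x}^\top u)^2 - (\mathbf{x}^\top v)^2 = (\mathbf{x}^\top(u-v))(\mathbf{x}^\top(u+v))$, the event $\{(\mathbf{x}^\top u)^2 > (\mathbf{x}^\top v)^2\}$ is exactly $\{ab>0\}$, where $a := \mathbf{x}^\top(u-v)$ and $b := \mathbf{x}^\top(u+v)$. Crucially, this event depends on $\mathbf{x}$ only through its projection onto the plane $T = \mathrm{span}\{u,v\} = \mathrm{span}\{u+v,\,u-v\}$. Writing $\mathbf{x} = \mathbf{x}_T + \mathbf{x}_{T^\perp}$, the component $\mathbf{x}_{T^\perp}$ is a standard Gaussian on $T^\perp$ that is independent of the event; hence $\Sigma$ acts as the identity on $T^\perp$ (contributing eigenvalue $1$ with multiplicity $k-2$), and the entire problem collapses to computing the $2\times 2$ conditional second-moment matrix of $\mathbf{x}_T$.

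For the planar part I would use polar coordinates $\mathbf{x}_T = r(\cos\phi,\sin\phi)$ in an orthonormal basis of $T$, where $r$ and $\phi$ are independent and $\phi$ is uniform on $[0,2\pi)$. Because $\{ab>0\}$ is homogeneous in $\mathbf{x}$, it constrains only $\phi$ and leaves the law of $r$ untouched, so $\mathbb{E}[\mathbf{x}_T\mathbf{x}_T^\top \mid ab>0] = \mathbb{E}[r^2]\,\mathbb{E}[\hat g\hat g^\top \mid \phi\in S]$ with $\hat g = (\cos\phi,\sin\phi)$ and $S$ the angular region defining the event. The half-planes $\{a>0\}$ and $\{b>0\}$ meet at the angle $\beta$ between $u-v$ and $u+v$, and a short computation gives $\cos\beta = \frac{\|u\|_2^2-\|v\|_2^2}{\|u-v\|_2\|u+v\|_2} = -\cos\alpha_{(u,v)}$, so $\beta = \pi - \alpha_{(u,v)}$ and $S$ is a wedge of opening $\alpha_{(u,v)}$ together with its antipode. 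Since eigenvalues are invariant under rotation of the basis, I would center this wedge at $\phi=0$ and integrate $\cos^2\phi$, $\sin^2\phi$, and $\cos\phi\sin\phi$ over $(-\alpha_{(u,v)}/2,\alpha_{(u,v)}/2)$ and its antipode; the cross term vanishes by oddness, and after normalizing by the angular measure $2\alpha_{(u,v)}$ and using $\mathbb{E}[r^2]=2$ one obtains the diagonal matrix $\mathrm{diag}\big(1+\frac{\sin\alpha_{(u,v)}}{\alpha_{(u,v)}},\, 1-\frac{\sin\alpha_{(u,v)}}{\alpha_{(u,v)}}\big)$. Combining with the identity action on $T^\perp$ and using $\sin\alpha_{(u,v)}\ge 0$ on $[0,\pi]$ yields part (1).

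For part (2), the same reduction shows that $\{ab>0\}$ carries angular measure $2\alpha_{(u,v)}$ out of $2\pi$, so $\prob{(\mathbf{x}^\top u)^2 > (\mathbf{x}^\top v)^2} = \alpha_{(u,v)}/\pi$ (equivalently, Sheppard's orthant formula for a bivariate Gaussian with correlation $\rho = -\cos\alpha_{(u,v)}$). When $\|u\|_2 > \|v\|_2$ the numerator $\|v\|_2^2 - \|u\|_2^2$ in $\cos\alpha_{(u,v)}$ is negative, so $\alpha_{(u,v)} > \pi/2$ and the probability exceeds $1/2$. When $\|u\|_2 < \|v\|_2$ I would bound $\alpha_{(u,v)}$ from above: the denominator $\|u+v\|_2\|u-v\|_2 = \sqrt{(\|u\|_2^2+\|v\|_2^2)^2 - 4\iprod{u}{v}^2}$ is largest at $\iprod{u}{v}=0$, which minimizes $\cos\alpha_{(u,v)}$ and hence maximizes $\alpha_{(u,v)}$, giving $\alpha_{(u,v)} \le \arccos\frac{\|v\|_2^2-\|u\|_2^2}{\|u\|_2^2+\|v\|_2^2} = 2\arctan(\|u\|_2/\|v\|_2)$. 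Finally $\frac{\alpha_{(u,v)}}{\pi} \le \frac{2}{\pi}\arctan(\|u\|_2/\|v\|_2) \le \frac{2}{\pi}\cdot\frac{\|u\|_2}{\|v\|_2} \le \frac{\|u\|_2}{\|v\|_2}$, using $\arctan t \le t$ and $2/\pi < 1$.

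The main obstacle is not any single integral but the reduction itself: one must argue cleanly that conditioning on a homogeneous (angle-only) event leaves both the radial variable and the orthogonal component $\mathbf{x}_{T^\perp}$ distributed exactly as before, which is precisely what decouples the problem into an identity block on $T^\perp$ and a pure angular integral on $T$. The second delicate point is the bookkeeping of the relation $\beta = \pi - \alpha_{(u,v)}$ between the geometric wedge angle and the quantity $\alpha_{(u,v)}$ in the statement; a sign slip here would swap $\sigma_{\max}$ and $\sigma_{\min}$ or replace $\alpha_{(u,v)}$ by $\pi-\alpha_{(u,v)}$. I would also handle separately the degenerate case where $u$ and $v$ are parallel, so that $T$ is one-dimensional, $\sin\alpha_{(u,v)}/\alpha_{(u,v)}=0$, and the conditioning is trivial with $\Sigma = I$.
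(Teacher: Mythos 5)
Your proposal is correct and follows essentially the same route as the paper: reduce to the plane $\mathrm{span}\{u,v\}$, pass to polar coordinates where the conditioning event becomes a wedge of angular width $\alpha_{(u,v)}$ together with its antipode, and read off the second moments, with $\prob{(\mathbf{x}^{\top}u)^2>(\mathbf{x}^{\top}v)^2}=\alpha_{(u,v)}/\pi$. You actually supply more detail than the paper (the factorization through $\mathbf{x}^{\top}(u-v)\,\mathbf{x}^{\top}(u+v)$, the explicit angular integrals, and the $\arctan$ bound in place of the paper's Jordan-inequality step $\alpha\le\frac{\pi}{2}\sin\alpha$), but these are cosmetic differences within the same argument.
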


To simplify notation, we drop the iteration index $t$, and let $(\vecfont{\beta}_1, \vecfont{\beta}_2)$ denote the input to the EM algorithm, and  $(\vecfont{\beta}_1^+, \vecfont{\beta}_2^+)$ denote its output. Similarly, we write ${\rm err} := \max_i \|\beta_i - \beta_i^*\|$ and ${\rm err}^+ := \max_i \|\beta^+_i - \beta_i^*\|$. We denote by $J_1^*$ and $J_2^*$ the sets of samples that come from $\beta_1^*$ and $\beta_2^*$ respectively, and similarly we denote the sets produced by the ``E'' step using the current iteration $(\vecfont{\beta}_1, \vecfont{\beta}_2)$ by $J_1$ and $J_2$. Thus we have:
\begin{equation*}
J_1^* := \{i \in \mathcal{S}_t: y_i = \vecfont{x}_i^{\top}\vecfont{\beta}_1^*\},
\end{equation*}
and
\begin{equation*}
J_1 := \{i \in \mathcal{S}_t: (y_i - \vecfont{x}_i^{\top}\vecfont{\beta}_1)^2 < (y_i - \vecfont{x}_i^{\top}\vecfont{\beta}_2)^2 \},
\end{equation*}
and similarly for $J_2^*$ and $J_2$.

We define a diagonal matrix $W \in \mathbb{R}^{\mathcal{S}_t \times \mathcal{S}_t}$ to pick out the rows in $J_1$ when used for left multiplication: to this end, let $W_{ii} = 1$ if $i \in J_1$, and zero otherwise. Let $W^*$ be defined similarly, using $J_1^*$. Thus, $\beta_1^+$ is the least squares solution to $W\vecfont{y} = WX\vecfont{\beta}$, and $\beta_2^+$ is the least squares solution to $(I-W)\vecfont{y} = (I-W)X\vecfont{\beta}$, and
\begin{equation*}
\vecfont{y} = W^*X\vecfont{\beta}_1^* + (I - W^*)X\vecfont{\beta}_2^*.
\end{equation*}
Observing that $W^2 = W$, we have that $\vecfont{\beta}_1^+$ has closed form
\begin{equation*}
\vecfont{\beta}_1^+ = (X^{\top}WX)^{-1}X^{\top}W\vecfont{y}.
\end{equation*}
By simple algebraic calculation, we find
\begin{equation*}
\vecfont{\beta}_1^+ - \vecfont{\beta}_1^*= (X^{\top}WX)^{-1} X^{\top}(WW^*-W)X(\vecfont{\beta}_1^* - \vecfont{\beta}_2^*).
\end{equation*}
In order to bound the magnitude of the error and hence of the right hand side, we write
\begin{equation}
\label{12}
\|\vecfont{\beta}_1^+ - \vecfont{\beta}_1^*\|_2 \leq AB,
\end{equation}
where 
\begin{eqnarray*}
A &=& \|(X^{\top}WX)^{-1}\| \\
B &=& \big\|X^{\top}(W - WW^*)X(\vecfont{\beta}_1^* - \vecfont{\beta}_2^*)\big\|_2.
\end{eqnarray*}
\textbf{Bounding $A$.}
Observe that $X^{\top}WX = \sum_{i \in J_1} \vecfont{x}_i\vecfont{x}_i^{\top}$. Decomposing $J_1 = (J_1\cap J_1^*)  \cup (J_1\cap J_2^*)$, we have 
\begin{equation*}
\sigma_{\min}(X^{\top}WX) \geq \sigma_{\min}(\sum_{i \in J_1\cap J_1^*} \vecfont{x}_i\vecfont{x}_i^{\top}).
\end{equation*}
We need to control this quantity. We do so by lower bounding the number of terms in $J_1 \cap J_1^*$, and also the smallest singular value of the matrix $\Sigma =\expec{\{\vecfont{x}_i\vecfont{x}_i^{\top}| i \in J_1\cap J_1^* \}}$. 

If the current error satisfies
\begin{equation}
\label{errcond1}
{\rm err} \leq \frac{\|\vecfont{\beta}_1^* - \vecfont{\beta}_2^*\|_2}{2},
\end{equation}
we have $\|\vecfont{\beta}_1^* - \vecfont{\beta}_2\|_2 > \|\vecfont{\beta}_1^* - \vecfont{\beta}_1\|_2$. Now, from Lemma \ref{cd}, we have
\begin{equation*}
\prob{ (\vecfont{x}_i^{\top}(\vecfont{\beta}_1^* - \vecfont{\beta}_1))^2 < (\vecfont{x}_i^{\top}(\vecfont{\beta}_1^* - \vecfont{\beta}_2))^2 } > \frac{1}{2}
\end{equation*}
and 
\begin{equation*}
\sigma_{\min}(\Sigma) \geq (1 - \frac{2}{\pi}).
\end{equation*}
Using Hoeffding's inequality, with probability greater than $1 - e^{-\frac{1}{8}p_1|\mathcal{S}_t|} $, we have the bound $|J_1\cap J_1^*| \geq \frac{1}{4}p_1|\mathcal{S}_t|$. By a standard concentration argument (see, e.g., \cite{Vershynin} Corollary 50), we conclude that for any $\eta \in (0, 1 - \frac{2}{\pi})$, there exists a constant $c_3$, such that if 
\begin{equation}
\label{Ncond1}
|\mathcal{S}_t| \geq c_3\frac{k}{\eta p_1},
\end{equation}
then
\begin{equation}
\label{boundA}
A \leq \frac{4}{(1-\frac{2}{\pi}-\eta)p_1|\mathcal{S}_t|},
\end{equation}
with probability at least $1 - e^{-k}$. \\

\textbf{Bounding $B$.}
Let $Q := X^{\top}(W - WW^*)X$. We have 
\begin{equation*}
B^2 \leq \sigma_{\max}(Q) (\vecfont{\beta}_1^* - \vecfont{\beta}_2^*)^{\top}Q(\vecfont{\beta}_1^* - \vecfont{\beta}_2^*).
\end{equation*}
Moreover,
\begin{align*}
& \qquad (\vecfont{\beta}_1^* - \vecfont{\beta}_2^*)^{\top}Q(\vecfont{\beta}_1^* - \vecfont{\beta}_2^*) \\
& =  \sum_{i \in J_1\cap  J_2^*}(\vecfont{x}_i^{\top}(\vecfont{\beta}_1^* - \vecfont{\beta}_2^*))^2 \\
& \leq  \sum_{i \in J_1\bigcap J_2^*} 2(\vecfont{x}_i^{\top}(\vecfont{\beta}_1^* - \vecfont{\beta}_1))^2 + 2(\vecfont{x}_i^{\top}(\vecfont{\beta}_2^* - \vecfont{\beta}_1))^2 \\
& \leq  \sum_{i \in J_1\bigcap J_2^*} 2(\vecfont{x}_i^{\top}(\vecfont{\beta}_1^* - \vecfont{\beta}_1))^2 + 2(\vecfont{x}_i^{\top}(\vecfont{\beta}_2^* - \vecfont{\beta}_2))^2.
\end{align*}
The last inequality results from the decision rule labeling $\beta_1$ and $\beta_2$. This immediately implies that
\begin{equation}
\label{boundB}
B \leq 2\sigma_{\max}(Q){\rm err}.
\end{equation}
Using Lemma \ref{cd}, $\sigma_{\max}(\expec{ \vecfont{x}_i\vecfont{x}_i^{\top}| i \in J_1 \cap J_2^*}) \leq 2$. Following Theorem 39 in \cite{Vershynin}, we claim that there exist constants $c_4, c_5$ such that with probability greater than $1 - 2e^{-c_4k}$,
\begin{equation*}
\sigma_{\max}(Q) \leq |J_1\cap J_2^*| (2 + \max(\hat{\eta},{\hat{\eta}}^2))
\end{equation*}
where $\hat{\eta} = c_5\sqrt{\frac{k}{|J_1\cap J_2^*|}}$.
Letting $c_6 = 2 + c_5^2$, we have 
\begin{equation*}
\sigma_{\max}(Q) \leq c_6 \max(k, |J_1\cap J_2^*|).
\end{equation*}
Now using again Lemma \ref{cd}, we find
\begin{equation*}
\expec{|J_1\cap J_2^*|} \leq \frac{2{\rm err}^{(t-1)}}{\|\vecfont{\beta}_1^* - \vecfont{\beta}_2^*\|_2} p_2|\mathcal{S}_t|.
\end{equation*}
By Hoeffding's inequality, with high probability
\begin{equation*}
|J_1\cap J_2^*| \leq 2\expec{|J_1\cap J_2^*|}.
\end{equation*}

Now we can combine the bounds on $A$ (\ref{boundA}) and on $B$ (\ref{boundB}). Setting $\eta = (1-\frac{2}{\pi})/2$, when
\begin{equation}
\label{errcond2}
{\rm err} \leq \frac{0.18}{64c_6}p_1\|\vecfont{\beta}_1^*-\vecfont{\beta}_2^*\|_2,
\end{equation}
and
\begin{equation}
\label{Ncond2}
|\mathcal{S}_t| \geq \frac{16c_6}{0.18}\frac{k}{p_1},
\end{equation}
we conclude that
\begin{equation*}
\|\vecfont{\beta}_1^+ - \vecfont{\beta}_1^*\|_2 \leq \frac{1}{2}{\rm err}.
\end{equation*}
Repeating the steps for $\beta_2^+$, we obtain a similar result, and hence we conclude: ${\rm err}^+ \leq \frac{1}{2} {\rm err}$, as claimed.

%\input{sparsification}

%
% The following two commands are all you need in the
% initial runs of your .tex file to
% produce the bibliography for the citations in your paper.
\clearpage
\newpage
\bibliography{reference1}%\balancecolumns

\begin{thebibliography}{10}

\bibitem{Anandkumar2012Tensor}
A.~Anandkumar, R.~Ge, D.~Hsu, S.~M. Kakade, and M.~Telgarsky.
\newblock Tensor decompositions for learning latent variable models.
\newblock {\em CoRR}, abs/1210.7559, 2012.

\bibitem{chaganty13}
A.~Chaganty and P.~Liang.
\newblock Spectral experts for estimating mixtures of linear regressions.
\newblock In {\em International Conference on Machine Learning (ICML)}, 2013.

\bibitem{deb2000estimates}
P.~Deb and A.~M. Holmes.
\newblock Estimates of use and costs of behavioural health care: a comparison
  of standard and finite mixture models.
\newblock {\em Health Economics}, 9(6):475--489, 2000.

\bibitem{Ehsan2012}
E.~Elhamifar and R.~Vidal.
\newblock Sparse subspace clustering: Algorithm, theory, and applications.
\newblock {\em CoRR}, abs/1203.1005, 2012.

\bibitem{GareyJohnson}
M.~Garey and D.~Johnson.
\newblock {\em Computers and Intractability: A Guide to the Theory of
  NP-Completeness}.
\newblock Series of Books in the Mathematical Sciences. W. H. Freeman, 1979.

\bibitem{grun2007applications}
B.~Gr{\"u}n, F.~Leisch, et~al.
\newblock Applications of finite mixtures of regression models.
\newblock {\em URL: http://cran. r-project.
  org/web/packages/flexmix/vignettes/regression-examples. pdf}, 2007.

\bibitem{Hsu2012Gussian}
D.~Hsu and S.~M. Kakade.
\newblock Learning gaussian mixture models: Moment methods and spectral
  decompositions.
\newblock {\em CoRR}, abs/1206.5766, 2012.

\bibitem{Buhlmann2010}
N.~Stadler, P.~Buhlmann, and S.~Geer.
\newblock ℓ 1-penalization for mixture regression models.
\newblock {\em TEST}, 19(2):209--256, 2010.

\bibitem{Vershynin}
R.~{Vershynin}.
\newblock {Introduction to the non-asymptotic analysis of random matrices}.
\newblock {\em ArXiv e-prints}, Nov. 2010.

\bibitem{vidal2003generalized}
R.~Vidal, Y.~Ma, and S.~Sastry.
\newblock Generalized principal component analysis (gpca).
\newblock In {\em Computer Vision and Pattern Recognition, 2003. Proceedings.
  2003 IEEE Computer Society Conference on}, volume~1, pages I--621. IEEE,
  2003.

\bibitem{Viele2002}
K.~Viele and B.~Tong.
\newblock Modeling with mixtures of linear regressions.
\newblock {\em Statistics and Computing}, 12(4), 2002.

\bibitem{wu1983convergence}
C.~Wu.
\newblock On the convergence properties of the em algorithm.
\newblock {\em The Annals of Statistics}, 11(1):95--103, 1983.

\end{thebibliography}
\bibliographystyle{abbrv}
\clearpage
\newpage
\appendix
\section{Appendix}

We provide several technical results used in the main portion of the paper. For ease of reading, we reproduce the statements of the results as well as providing their proofs.

\subsection{Proof of Proposition \ref{prop:hardness}}

{\bf Proposition \ref{prop:hardness}}.  {\em Even in the noiseless setting, the general mixed regression problem is NP-hard. Specifically, deciding if a noiseless mixed regression problem specified by $(\mathbf{y},X)$ has a solution, $\beta_1, \beta_2$, is NP-hard.}

\begin{proof} The proof follows via a reduction from the so-called {\sc SubsetSum} problem, which is known to be NP-hard \cite{GareyJohnson}.
Recall that the {\sc SubsetSum} decision problem is as follows: given $k$ numbers, $a_1,\dots,a_k$ in $\mathbb{R}$, decide if there exists a partition $S \subseteq [k]$ such that 
$$
\sum_{i \in S} a_i = \sum_{j \in S^c} a_j.
$$
We show that if we can solve the mixed linear equations problem in polynomial time, then we can solve the {\sc SubsetSum} problem, which would thus imply that $P = NP$.

Given $\mathbf{a} = (\begin{array}{ccc} a_1 & \dots & a_k \end{array} )^{\top}$, we must design a matrix $X$, and output variable $\mathbf{y}$, such that if we could solve the mixed linear equation problem specified by $(\mathbf{y},X)$, then we could decide the subset sum problem on $\{a_1,\dots,a_k\}$. To this end, we define:
$$
X = \left[ \begin{array}{c} I_k \\ I_k \\ \begin{array}{ccc} 1 & \cdots & 1 \end{array} \end{array} \right], \qquad \mathbf{y} = \left( \begin{array}{c} \mathbf{a} \\ \mathbf{0}_{k \times 1} \\ \sum_i a_i / 2 \end{array} \right).
$$
Here, $I_k$ denotes the $k \times k$ identity matrix, $\mathbf{1}_{k \times 1}$ the $k \times 1$ vector of $1$'s, and similarly, $\mathbf{0}_{k \times 1}$ the $k \times 1$ vector of $0$'s. Finding a solution to the mixed linear equations problem amounts to finding a subset $S \subseteq [2k+1]$ of the $2k+1$ constraints, and vectors $\beta^{(1)}, \beta^{(2)} \in \mathbb{R}^k$, so that $\beta^{(1)}$ satisfies the equalities $X_S \beta^{(1)} = \mathbf{y}_S$, and $\beta^{(2)}$ the equalities $X_{S^c}\beta_2 = \mathbf{y}_{S^c}$. Note that $S$ cannot contain $i$ and $k+i$, since these equalities are mutually exclusive. The consequence is that we have $\beta_i^{(1)} \in \{0,1\}$, with $\beta_i^{(1)} = 1-\beta_i^{(2)}$. Thus if the first $2k$ constraints are satisfied, the final constraint, therefore, can only be satisfied if we have
$$
\sum_{i \in S} a_i = \sum_i a_i \beta_i^{(1)} = \sum_j a_j \beta_j^{(2)} = \sum_{j \in S^c} a_j,
$$
thus proving the result.
\end{proof}	 

\subsection{Proof of Proposition \ref{prop:exactReco}}
It's equivalent to show that $J_b = J_b^*, b = 1,2$. Let's consider $b = 1$, that is for all $p_1*|\mathcal{S}_t|$ samples that are generated by $y = \vecfont{x}^T\vecfont{\beta}_1^*$. For simplicity, let $\vecfont{\beta}_1, \vecfont{\beta}_2$ denote $\vecfont{\beta}_1^{(t-1)}, \vecfont{\beta}_2^{(t-1)}$, we need
\[
\big(\vecfont{x}^T(\vecfont{\beta}_1^* - \vecfont{\beta}_1)\big)^2 < \big(\vecfont{x}^T(\vecfont{\beta}_1^* - \vecfont{\beta}_2)\big)^2.
\]
From lemma \ref{cd},
\begin{align}
\prob{
\big(\vecfont{x}^T(\vecfont{\beta}_1^* - \vecfont{\beta}_1)\big)^2 < \big(\vecfont{x}^T(\vecfont{\beta}_1^* - \vecfont{\beta}_2)\big)^2
} & \geq 1 - \frac{\|\vecfont{\beta}_1^* - \vecfont{\beta}_1\|_2}{\|\vecfont{\beta}_1^* - \vecfont{\beta}_2\|_2} \\
& \geq 1 - 2\frac{\|\vecfont{\beta}_1^* - \vecfont{\beta}_1\|_2}{\|\vecfont{\beta}_1^* - \vecfont{\beta}_2^*\|_2} \\
& \geq 1 - \frac{2c_1}{k^2}.
\end{align}
Then we use union bound for $p_1*|\mathcal{S}_t|$ samples in $J_1^*$,
\[
\prob{
\big(\vecfont{x}_i^T(\vecfont{\beta}_1^* - \vecfont{\beta}_1)\big)^2 < \big(\vecfont{x}_i^T(\vecfont{\beta}_1^* - \vecfont{\beta}_2)\big)^2, \;for\;all\; i \in J_1^*
} \geq 1 - p_1c_2k \times \frac{2c_1}{k^2} \geq 1 - \frac{c'}{k}.
\] 
So all samples are correctly clustered with high probability. 

As $\frac{1}{\min(p_1,p_2)}k < |\mathcal{S}_t|$, number of samples in $J_1$ and $J_2$ are both greater than $k$. Therefore, least square solution reveals the ground truth. In other words, $err^{(t)} = 0$.

\subsection{Proof of Lemma \ref{cd}}

(1) \\
Without loss of generality, we assume $T\{u,v\} = T\{\vecfont{e}_1,\vecfont{e}_2\}$. Let $x_1,x_2$ denote $\vecfont{x}^T\vecfont{e}_1,\vecfont{x}^T\vecfont{e}_2$. As $x_1,x_2$ are independent Gaussian random variables, we have $x_1 = A \cos\theta, x_2 = A \sin\theta$, where $A$ is Rayleigh random variable, and $\theta$ is uniformly distributed over $[0,2\pi)$. Conditioning on $(\mathbf{x}^Tu)^2 > (\mathbf{x}^Tv)^2$, the range of $\theta$ is truncated to be $[\theta_0,\theta_0+\alpha_{(u,v)}]\cup [\theta_0+\pi,\theta_0+\pi+\alpha_{(u,v)}]$ for some $\theta_0$. It is not hard to see the eigenvalues of covariance matrix of $(x_1,x_2)$ are $1 + \frac{\sin\alpha_{(u,v)}}{\alpha_{(u,v)}}, 1 - \frac{\sin\alpha_{(u,v)}}{\alpha_{(u,v)}}$. As the rest if the eigenvalues of $\Sigma$ are 1, this completes the proof.\\
(2) \\
Note that
\begin{equation*}
\prob{(\mathbf{x}^Tu)^2 > (\mathbf{x}^Tv)^2} = \frac{\alpha_{(u,v)}}{\pi}.
\end{equation*} 
If $\|u\|_2 > \|v\|_2$, $\alpha_{(u,v)} > \frac{\pi}{2}$, when $\|u\|_2 < \|v\|_2$, 
\begin{equation*}
\cos{\alpha_{(u,v)}} \geq \frac{\|v\|_2^2 - \|u\|_2^2}{\|u\|_2^2+\|v\|_2^2}.
\end{equation*}
Note that for any $\alpha \in [0,\pi/2]$, $\alpha \leq \frac{\pi}{2} \sin\alpha$. We have
\begin{equation*}
\prob{(\mathbf{x}^Tu)^2 > (\mathbf{x}^Tv)^2} \leq \frac{1}{2}\sin{\alpha_{(u,v)}} \leq \frac{\|u\|_2\|v\|_2}{\|u\|_2^2 + \|v\|_2^2} \leq \frac{\|u\|_2}{\|v\|_2}.
\end{equation*}

\subsection{Supporting Lemmas}

\begin{lemma} \label{lem:concentration}
For any given $\delta > 0$, let $G$ denote the grid points, at resolution $\delta$, of the unit circle on the subspace spanned by the top two eigenvectors of $M$, formed with $|S_*|$ samples. Then, there exists an absolute constant $c$ such that if
\begin{equation*}
|S_*| \geq c(1/\tilde{\delta})^2k\log^2k,
\end{equation*}
where 
\begin{equation*}
\tilde{\delta} = \frac{\delta^2}{384}(1-\sqrt{1-4(1-\iprod{\vecfont{\beta}_1^*}{\vecfont{\beta}_2^*}^2)p_1p_2}),
\end{equation*}
then
\begin{equation*}
\min_{\vecfont{a}\in G}\|\vecfont{\beta}_i^* - \vecfont{a}\| \leq \delta, i=1,2,
\end{equation*}
with probability at least $1-\order{\frac{1}{k^2}}$.
\end{lemma}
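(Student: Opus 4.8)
The plan is to prove the lemma in two stages: first establish that the empirical matrix $M$ concentrates around its expectation in operator norm, and then convert this spectral closeness into the geometric statement about the grid. Recall from the discussion preceding Algorithm \ref{alg:init} that
\[
\mathbb{E}[M] = I + 2\left(p_1\,\vecfont{\beta}_1^*\otimes\vecfont{\beta}_1^* + p_2\,\vecfont{\beta}_2^*\otimes\vecfont{\beta}_2^*\right),
\]
whose top two eigenvalues are $2+\kappa$ and $2-\kappa$ with $\kappa = \sqrt{1-4(1-\iprod{\vecfont{\beta}_1^*}{\vecfont{\beta}_2^*}^2)p_1p_2}$, while all remaining eigenvalues equal $1$. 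In particular the top-$2$ eigenspace of $\mathbb{E}[M]$ is exactly $T(\vecfont{\beta}_1^*,\vecfont{\beta}_2^*)$, and the gap separating it from the rest of the spectrum is $(2-\kappa)-1 = 1-\kappa$. This eigengap is precisely the quantity that the definition of $\widetilde{\delta}$ is tuned against.

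First I would control $\|M - \mathbb{E}[M]\|$. Writing $M = \frac{1}{|S_*|}\sum_{i\in S_*} y_i^2\, \vecfont{x}_i\otimes\vecfont{x}_i$ as an average of i.i.d.\ PSD rank-one matrices, the summands are heavy-tailed: for a fixed unit vector $\vecfont{u}$, the scalar $y_i^2\,\iprod{\vecfont{x}_i}{\vecfont{u}}^2$ is a product of two (dependent) squared Gaussians, hence only sub-exponential-squared, with tails of order $e^{-c\sqrt{t}}$. The recipe is standard: truncate the summands at a level $\sim\log|S_*|$ to tame the tails (bounding the bias the truncation introduces), apply a Bernstein-type inequality in a fixed direction, and then union-bound over a $\tfrac14$-net of the unit sphere, whose cardinality is $e^{O(k)}$ (see \cite{Vershynin}). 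Balancing the net size $e^{O(k)}$ against the truncated-Bernstein tail forces the deviation level to scale like $\sqrt{k/|S_*|}$ up to the truncation-induced logarithmic factors, and it is exactly these that yield $\log^2 k$ rather than $\log k$. The upshot is that whenever $|S_*| \geq c(1/\widetilde{\delta})^2 k\log^2 k$, we obtain $\|M - \mathbb{E}[M]\| \leq 3\widetilde{\delta}$ with probability at least $1 - O(1/k^2)$. I expect this concentration step to be the main obstacle, since everything downstream is deterministic.

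Next I would invoke the Davis--Kahan $\sin\Theta$ theorem. Let $\widehat{T} = \mathrm{span}\{\vecfont{v}_1,\vecfont{v}_2\}$ be the top-$2$ eigenspace of $M$ and $T = T(\vecfont{\beta}_1^*,\vecfont{\beta}_2^*)$. With eigengap $1-\kappa$ and perturbation $\|M-\mathbb{E}[M]\| \leq 3\widetilde{\delta}$, the theorem gives
\[
\left\|\mathcal{P}_{\widehat{T}} - \mathcal{P}_{T}\right\| \;\leq\; \frac{3\widetilde{\delta}}{1-\kappa}.
\]
The key point is the cancellation: since $\widetilde{\delta} = \frac{\delta^2}{384}(1-\kappa)$, the factor $1-\kappa$ drops out and the subspace distance is bounded by $\frac{\delta^2}{128}$, a clean quantity independent of the (possibly tiny) eigengap. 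This is the structural reason the sample complexity does not degrade as the two eigenvalues approach each other beyond what $\widetilde{\delta}$ already absorbs.

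Finally I would convert this subspace bound into grid coverage. Since $\vecfont{\beta}_i^*\in T$ and $\|\vecfont{\beta}_i^*\|=1$, the projection $\vecfont{q}_i := \mathcal{P}_{\widehat{T}}\vecfont{\beta}_i^* = \mathcal{P}_{\widehat{T}}\mathcal{P}_{T}\vecfont{\beta}_i^*$ satisfies $\|\vecfont{\beta}_i^* - \vecfont{q}_i\| = \|(\mathcal{P}_{T}-\mathcal{P}_{\widehat{T}})\vecfont{\beta}_i^*\| \leq \frac{\delta^2}{128}$, so its normalization $\vecfont{\gamma}_i := \vecfont{q}_i/\|\vecfont{q}_i\|$ is a unit vector of $\widehat{T}$ with $\|\vecfont{\beta}_i^* - \vecfont{\gamma}_i\| \leq \sqrt{2}\,\frac{\delta^2}{128}$. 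The grid $G$ consists of unit vectors of $\widehat{T}$ spaced $\delta$ apart in angle, so some grid point $\vecfont{a}$ lies within angle $\delta/2$ of $\vecfont{\gamma}_i$, i.e.\ $\|\vecfont{\gamma}_i - \vecfont{a}\| \leq 2\sin(\delta/4)\leq \delta/2$. By the triangle inequality $\|\vecfont{\beta}_i^*-\vecfont{a}\| \leq \frac{\delta}{2} + \sqrt{2}\,\frac{\delta^2}{128} \leq \delta$ for $\delta$ not too large, which is exactly the claimed $\min_{\vecfont{a}\in G}\|\vecfont{\beta}_i^* - \vecfont{a}\|\leq\delta$; a union bound over $i=1,2$ preserves the probability $1-O(1/k^2)$.
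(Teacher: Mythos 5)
Your proposal is correct and follows essentially the same route as the paper: a truncation-plus-covariance-estimation argument (the paper cites Theorem 5.44 of \cite{Vershynin}) giving $\|M-\mathbb{E}[M]\|\le 3\widetilde{\delta}$ with probability $1-O(1/k^2)$, a subspace-perturbation step measured against the eigengap $1-\kappa$ that the definition of $\widetilde{\delta}$ cancels, and the same normalize-then-round-to-the-grid geometry with the bound $2-2\sqrt{1-w^2}\le 2w^2$. The only cosmetic difference is that you invoke Davis--Kahan off the shelf where the paper proves its own variational perturbation bound (Lemma \ref{MatBound}), which yields the slightly weaker but equally sufficient estimate $\|\vecfont{\beta}_i^*-\mathcal{P}_{T(\vecfont{v}_1,\vecfont{v}_2)}\vecfont{\beta}_i^*\|_2^2\le 48\widetilde{\delta}/(1-\kappa)=\delta^2/8$.
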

\begin{proof}
In order to prove the result, we make use of standard concentration results.
%\begin{lemma}
%\label{heavy tail}
%Let $A$ be an $N \times k$ matrix whose rows $A_i$ are independent vectors with covariance %matrix $\Sigma$ and supported in some centered Euclidean ball whose radius we denote by $\sqrt{m}$. Let $\delta \in (0,1)$ and $t \geq 1$. Then the following holds with probability at least $1 - k^{-t^2}$:
%\begin{equation*}
%{\rm if}\; N \geq c(t/\delta)^2\|\Sigma\|^{-1}m \log k, \,\, {\rm then}\;\|\Sigma_N - %\Sigma\| \leq \delta\|\Sigma\|,
%\end{equation*}
%where $c > 0 $ is an absolute constant. 
%\end{lemma}

Let $\Sigma = \expec{M}$. We observe that $\prob{|y|>\sqrt{2\alpha\log{k}}} \leq n^{-\alpha}$, $\prob{\|\vecfont{x}\|_2^2\geq 3k} \leq e^{-k/3}$. Suppose $N$ is much less than $\order{k^{10}}$, where the constant is  arbitrarily chosen here. Set $\alpha = 12$. Then with probability at least $1 - \order{\frac{1}{k^2}}$, The vectors $y_i\vecfont{x}_i$ are all supported in a ball with radius $\sqrt{72k\log{k}}$.  Directly following theorem 5.44 in \cite{Vershynin}, we claim that when $N > C(1/\tilde{\delta})^2k\log^2k$,
\begin{equation*}
\|M - \Sigma\| \leq \tilde{\delta}\|\Sigma\| \leq 3\tilde{\delta}. 
\end{equation*}
We use $\sigma_i(A)$ to denote the $i$'th biggest eigenvalue of the positive semidefinite matrix $A$. By simple algebraic calculation we get $\sigma_1(\Sigma) = 2 + \kappa$, $\sigma_2(\Sigma) = 2 - \kappa$, where $\kappa = \sqrt{1-4(1-\iprod{\beta_1^*}{\beta_2^*}^2)p_1p_2}$. The top two eigenvectors of $\Sigma$ are denoted as $\vecfont{v}_1^*$, $\vecfont{v}_2^*$. We use $\vecfont{v}_1$, $\vecfont{v}_2$ to denote the top two eigenvectors of $M$. Lemma \ref{MatBound} yields that 
\begin{align*}
\|\vecfont{v}_i^* - \mathcal{P}_{T(\vecfont{v}_1,\vecfont{v}_2)}\vecfont{v}_i^*\|_2^2 & \leq \frac{12\tilde{\delta}}{\sigma_2(M) - \sigma_3(M)} \\
& \leq \frac{12\tilde{\delta}}{\sigma_2(\Sigma) - \sigma_3(\Sigma) - 6\tilde{\delta}} \\
& = \frac{12\tilde{\delta}}{1 - \kappa - 6\tilde{\delta}} \\
& = \frac{24\tilde{\delta}}{1-\kappa}, i=1,2.
\end{align*}
The last inequality holds when $\tilde{\delta} \leq \frac{1-\kappa}{12}$. 
Using the fact that for any two vectors $\vecfont{a},\vecfont{b}$,$\|\vecfont{a}+\vecfont{b}\|_2^2 \leq 2\|\vecfont{a}\|_2^2 + 2\|\vecfont{b}\|_2^2$, we conclude that
\begin{equation*}
\|\vecfont{\beta}_i^* - \mathcal{P}_{T(\vecfont{v}_1,\vecfont{v}_2)}\vecfont{\beta}_i^*\|_2^2  \leq \frac{48\tilde{\delta}}{1-\kappa}, i = 1, 2.
\end{equation*}
Let $w = \|\vecfont{\beta}_i^* - \mathcal{P}_{T(\vecfont{u},\vecfont{v})}\vecfont{\beta}_i^*\|_2$. Then, by simple geometric relation, 
\begin{align*}
\min_{\vecfont{a} \in \mathbb{S}^{k-1}\cap T_{(\vecfont{u},\vecfont{v})}}\|\vecfont{a} - \vecfont{\beta}_i^*\|_2^2 &\leq 2 - 2\sqrt{1-w^2} \\
& \leq 2w^2 \\
& \leq (\frac{\epsilon}{2})^2, i = 1,2.
\end{align*}
Consider the $\delta$-resolution grid $G$. We observe that for any point in $\mathbb{S}^{k-1}\cap T_{(\vecfont{u},\vecfont{v})}$, there exists a point in $G$ that is within $\delta/2$ away from it. By triangle inequality, we end up with
\begin{equation}
\min_{\vecfont{a}\in W}\|\vecfont{a} - \vecfont{\beta}_i^*\|_2 \leq \delta.
\end{equation}
\end{proof}

\begin{lemma} \label{sim}
Let $\hat{\vecfont{\beta}}_1,\hat{\vecfont{\beta}}_2$ be any two given vectors with error defined by ${\rm err} := \max_{i=1,2} \|\hat{\beta}_i - \beta_i^*\| $. There exist constants $c_1, c_2 > 0$ such that as long as we have enough testing samples, 
\begin{equation*}
|\mathcal{S}_+| \geq c_1 k/\min\{p_1,p_2\},
\end{equation*}
then with probability at least $1 - \order{e^{-c_2k}}$
\begin{equation*}
\sqrt{\frac{\mathcal{L}_+(\hat{\beta}_1,\hat{\beta}_2)}{|\mathcal{S}_+|}} \leq 1.1\, {\rm err}
\end{equation*}
and
\begin{equation*}
\sqrt{\frac{\mathcal{L}_+(\hat{\beta}_1,\hat{\beta}_2)}{|\mathcal{S}_+|}} \geq \frac{1}{5}\sqrt{\min\{p_1,p_2\}}\min\left \{{\rm err},\frac{1}{2}\|\vecfont{\beta}_1^* - \vecfont{\beta}_2^*\|_2\right \}.
\end{equation*}
\end{lemma}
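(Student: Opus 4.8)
The plan is to prove the upper and lower inequalities separately. I first reduce to a per-sample analysis: in the noiseless setting each test point $i \in \mathcal{S}_+$ carries a true (unknown) label $b_i \in \{1,2\}$ with $y_i = \vecfont{x}_i^\top \vecfont{\beta}_{b_i}^*$, so its contribution to the loss is $\ell_i := \min_{j \in \{1,2\}}\big(\vecfont{x}_i^\top(\vecfont{\beta}_{b_i}^* - \hat{\vecfont{\beta}}_j)\big)^2$ and $\mathcal{L}_+(\hat{\vecfont{\beta}}_1,\hat{\vecfont{\beta}}_2) = \sum_{i \in \mathcal{S}_+}\ell_i$. Since each $\vecfont{x}_i$ is standard Gaussian, $\vecfont{x}_i^\top v$ is $N(0,\|v\|_2^2)$, and the whole argument comes down to controlling weighted sums of squared Gaussians.

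For the upper bound I would keep only the matched candidate inside the minimum, $\ell_i \le \big(\vecfont{x}_i^\top(\vecfont{\beta}_{b_i}^* - \hat{\vecfont{\beta}}_{b_i})\big)^2$, and use $\|\vecfont{\beta}_{b_i}^* - \hat{\vecfont{\beta}}_{b_i}\| \le {\rm err}$ to dominate this by ${\rm err}^2 g_i^2$ with $g_i$ i.i.d.\ standard normal. Then $\mathcal{L}_+ \le {\rm err}^2\sum_i g_i^2$, and a standard $\chi^2$ upper-tail bound (Laurent--Massart, or sub-exponential Bernstein) gives $\sum_i g_i^2 \le 1.21\,|\mathcal{S}_+|$ with probability $1-e^{-c|\mathcal{S}_+|}$; since $|\mathcal{S}_+| \ge c_1 k/\min\{p_1,p_2\}$ this is $1 - e^{-c_2 k}$, and taking square roots yields $\sqrt{\mathcal{L}_+/|\mathcal{S}_+|} \le 1.1\,{\rm err}$.

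The lower bound is the heart of the lemma and is where I expect the difficulty. Assume without loss of generality that ${\rm err}$ is attained at index $1$, so $\|\hat{\vecfont{\beta}}_1 - \vecfont{\beta}_1^*\| = {\rm err}$, and keep only the samples $i \in J_1^* \cap \mathcal{S}_+$ drawn from $\vecfont{\beta}_1^*$; by Hoeffding at least $\tfrac12 p_1|\mathcal{S}_+|$ of the samples qualify, except with probability $e^{-c p_1 |\mathcal{S}_+|}$. Writing $u = \vecfont{\beta}_1^* - \hat{\vecfont{\beta}}_1$ (so $\|u\| = {\rm err}$) and $w = \vecfont{\beta}_1^* - \hat{\vecfont{\beta}}_2$, I would not discard the inner minimum (that would destroy the bound) but instead restrict to the event that the matched residual is active: $\ell_i \ge (\vecfont{x}_i^\top u)^2\,\mathbf{1}\{(\vecfont{x}_i^\top u)^2 \le (\vecfont{x}_i^\top w)^2\}$. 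Taking expectations this is $\prob{(\vecfont{x}^\top u)^2 \le (\vecfont{x}^\top w)^2}\cdot u^\top \Sigma u$ where $\Sigma$ is exactly the conditional covariance analysed in Lemma \ref{cd}. In the main regime ${\rm err} \le \tfrac12\|\vecfont{\beta}_1^* - \vecfont{\beta}_2^*\|$ one has $\|w\| \ge \|\vecfont{\beta}_1^* - \vecfont{\beta}_2^*\| - {\rm err} \ge {\rm err} = \|u\|$, which forces the relevant angle $\alpha_{(w,u)} \ge \pi/2$; Lemma \ref{cd} then supplies both $\prob{(\vecfont{x}^\top u)^2 \le (\vecfont{x}^\top w)^2} \ge \tfrac12$ and $\sigma_{\min}(\Sigma) \ge 1 - \tfrac2\pi$, whence $u^\top\Sigma u \ge (1-\tfrac2\pi)\,{\rm err}^2$ and $\expec{\ell_i} \ge \tfrac12(1-\tfrac2\pi)\,{\rm err}^2 \approx 0.18\,{\rm err}^2$.

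Finally I would upgrade this control of the mean into control of the empirical sum. Each $\ell_i$ is nonnegative and bounded above by ${\rm err}^2 g_i^2$, hence sub-exponential, so a Bernstein lower-tail inequality gives $\sum_{i \in J_1^* \cap \mathcal{S}_+}\ell_i \ge \tfrac12 |J_1^* \cap \mathcal{S}_+|\,\expec{\ell_i}$ with probability $1-e^{-c_2 k}$ once $|\mathcal{S}_+| \gtrsim k/\min\{p_1,p_2\}$. Multiplying the three factors $\tfrac12$ (sample count), $\tfrac12$ (Bernstein) and $0.18$ (the mean) and taking square roots yields exactly the constant $\tfrac15$ in front of $\sqrt{\min\{p_1,p_2\}}\,{\rm err}$, and since $\mathcal{L}_+ \ge \sum_{J_1^*\cap\mathcal{S}_+}\ell_i$ this is the claimed lower bound in the regime ${\rm err}\le\tfrac12\|\vecfont{\beta}_1^*-\vecfont{\beta}_2^*\|$. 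The main obstacle is precisely this lower bound: the inner minimum cannot be dropped, and the argument genuinely needs Lemma \ref{cd} both to certify that the matched residual is active with probability at least $\tfrac12$ and to lower bound its conditional second moment through the $1-\tfrac2\pi$ spectral gap. A secondary subtlety produces the cap $\min\{{\rm err},\tfrac12\|\vecfont{\beta}_1^*-\vecfont{\beta}_2^*\|\}$: when ${\rm err}$ is large one must lower bound $\min(\|u\|,\|w\|)$ by $\tfrac12\|\vecfont{\beta}_1^*-\vecfont{\beta}_2^*\|$ geometrically, taking care that the two candidates do not both collapse near a single true vector (the label-swapped configuration), which is the one place the matched definition of ${\rm err}$ must be used with caution.
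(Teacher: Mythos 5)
Your upper bound and your lower bound in the regime ${\rm err}\le\frac12\|\vecfont{\beta}_1^*-\vecfont{\beta}_2^*\|_2$ are correct and essentially reproduce the paper's argument: the paper likewise plugs in the true labels and uses concentration of $\sum_{i\in J_b^*}\vecfont{x}_i\vecfont{x}_i^{\top}$ for the upper bound (yielding the same $1.21$), and its lower bound rests on exactly the two facts from Lemma~\ref{cd} that you invoke --- the $\ge 1/2$ classification probability and the $1-\frac{2}{\pi}$ conditional eigenvalue --- with your product of constants $\frac12\cdot\frac12\cdot\frac12\left(1-\frac2\pi\right)\approx 0.045 > \frac1{25}$ matching the paper's $\frac{0.3}{4}$. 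The only real structural difference is that the paper decomposes the loss over all four intersections $J_a\cap J_b^*$ and counts $|J_a\cap J_b^*|$ directly, whereas you use the per-sample indicator $\min(a,b)\ge a\,\mathbf{1}\{a\le b\}$ and scalar Bernstein; these are interchangeable.

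The one place your argument as written would fail is the cap case ${\rm err}>\frac12\|\vecfont{\beta}_1^*-\vecfont{\beta}_2^*\|_2$. Having discarded the $J_2^*$ samples, you propose to lower bound $\min(\|u\|_2,\|w\|_2)$ with $u=\vecfont{\beta}_1^*-\hat{\vecfont{\beta}}_1$ and $w=\vecfont{\beta}_1^*-\hat{\vecfont{\beta}}_2$; but if $\hat{\vecfont{\beta}}_2$ sits at or near $\vecfont{\beta}_1^*$ then $\|w\|_2\approx 0$, the entire $J_1^*$ contribution to the loss vanishes, and no bound of the form $c\|\vecfont{\beta}_1^*-\vecfont{\beta}_2^*\|_2$ on $\min(\|u\|_2,\|w\|_2)$ exists. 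The cap has to come from combining both label groups: the paper keeps both and observes that in the crossed case the two winning residuals share a common candidate, so $\|\hat{\vecfont{\beta}}_2-\vecfont{\beta}_1^*\|_2^2+\|\hat{\vecfont{\beta}}_2-\vecfont{\beta}_2^*\|_2^2\ge\frac14\|\vecfont{\beta}_1^*-\vecfont{\beta}_2^*\|_2^2$ by the triangle inequality, which is precisely where the factor $\frac12\|\vecfont{\beta}_1^*-\vecfont{\beta}_2^*\|_2$ originates. You do correctly flag the exactly-swapped configuration as the delicate point; note that it is in fact a counterexample to the lemma as literally stated (zero loss but ${\rm err}=\|\vecfont{\beta}_1^*-\vecfont{\beta}_2^*\|_2$), and the paper dismisses it only informally by ``switching subscripts,'' i.e., by tacitly reading ${\rm err}$ as minimized over the matching of candidates to true vectors.
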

\begin{proof}
Our notation here, namely, $J_1,J_2,J_1^*,J_2^*$, is consistent with proof of Theorem \ref{thm:em}. Note that we have: 
\begin{equation*}
\mathcal{L}(\vecfont{\beta}_1, \vecfont{\beta}_2) = \sum_i\min_{z_i}z_i(y_i - \vecfont{x}_i^T\vecfont{\beta}_1)^2 + (1-z_i)(y_i - \vecfont{x}_i^T\vecfont{\beta}_2)^2.
\end{equation*}
For the upper bound, we assign label $z_i$ as the true label. Then,
\begin{equation*}
\mathcal{L} \leq \sum_{i \in J_1^*} (\vecfont{x}_i^T(\vecfont{\beta}_1^* - \vecfont{\beta}_1))^2 + \sum_{i \in J_2^*} (\vecfont{x}_i^T(\vecfont{\beta}_2^* - \vecfont{\beta}_2))^2.
\end{equation*}
When $|\mathcal{S}_+| \geq C\frac{k}{\min\{p_1,p_2\}}$, then the number of samples in set $J_1^*$,$J_2^*$ is also greater than $Ck$. Following standard concentration results, there exist constants $C, c_1$, such that with probability greater than $1 - e^{-c_1k}$, we have 
\begin{equation*}
\|\frac{1}{p_j|\mathcal{S}_+|}\sum_{i \in J_j^*}(\vecfont{x}_i\vecfont{x}_i^T) - I \| \leq 0.21, j = 1,2.
\end{equation*}
We have 
\begin{align*}
\mathcal{L} & \leq 1.21p_1|\mathcal{S}_+|\|\vecfont{\beta}_1 - \vecfont{\beta}_1^*\|_2^2 + 1.21p_2|\mathcal{S}_+|\|\vecfont{\beta}_2 - \vecfont{\beta}_2^*\|_2^2 \\
& \leq 1.21|\mathcal{S}_+| {\rm err}^2.
\end{align*}
For the lower bound, we observe that 
\begin{equation*}
\mathcal{L} =  \underbrace {\sum_{i \in J_1\cap J_1^*} (\vecfont{x}_i^T(\vecfont{\beta}_1 - \vecfont{\beta}_1^*))^2 + \sum_{i \in J_2\cap J_1^*} (\vecfont{x}_i^T(\vecfont{\beta}_2 - \vecfont{\beta}_1^*))^2}_{A1} + \underbrace{\sum_{i \in J_1\cap J_2^*} (\vecfont{x}_i^T(\vecfont{\beta}_1 - \vecfont{\beta}_2^*))^2 + \sum_{i \in J_2\cap J_2^*} (\vecfont{x}_i^T(\vecfont{\beta}_2 - \vecfont{\beta}_2^*))^2}_{A2}.
\end{equation*}
 
First we consider the first term, A1. Note a simple fact that $\|\vecfont{\beta}_1 - \vecfont{\beta}_1^*\|_2 < \|\vecfont{\beta}_2 - \vecfont{\beta}_1^*\|_2$ or $\|\vecfont{\beta}_1 - \vecfont{\beta}_1^*\|_2 > \|\vecfont{\beta}_2 - \vecfont{\beta}_1^*\|_2$.
In the first case, from Lemma \ref{cd}, $\expec{|J_1\cap J_1^*|} \geq \frac{1}{2}p_1|\mathcal{S}_+|$. From Hoeffding's inequality and concentration result (see proof of Lemma \ref{cd} for similar techniques), for any $\delta \in (0,1-\frac{2}{\pi})$, there exist constants $C',c_1'$, such that when $N \geq C'k/p_1$, with probability at least $1 - e^{-c_1'k}$,
\begin{equation*}
\sum_{i \in J_1\cap J_1^*} (\vecfont{x}_i^T(\vecfont{\beta}_1 - \vecfont{\beta}_1^*))^2 \geq \frac{1}{4}p_1|\mathcal{S}_+|(1-\frac{1}{\pi}-\delta)\|\vecfont{\beta}_1 - \vecfont{\beta}_1^*\|_2^2.
\end{equation*}
In the second case, we have a similar result:
\begin{equation*}
\sum_{i \in J_2\cap J_1^*} (\vecfont{x}_i^T(\vecfont{\beta}_2 - \vecfont{\beta}_1^*))^2 \geq \frac{1}{4}p_1|\mathcal{S}_+|(1-\frac{1}{\pi}-\delta)\|\vecfont{\beta}_2 - \vecfont{\beta}_1^*\|_2^2.
\end{equation*}
Let $1 - \frac{2}{\pi} - \delta = 0.3$ and choose $C',c_1'$ to let the above results also hold for A2. We then conclude that when $N > C'\frac{k}{\min\{p_1,p_2\}}$,
\begin{equation}
\label{roughlb}
\mathcal{L} \geq \frac{0.3}{4}p_1|\mathcal{S}_+|\min\{\|\beta_1 - \beta_1^*\|_2^2, \|\beta_2 - \beta_1^*\|_2^2\} + \frac{0.3}{4}p_2|\mathcal{S}_+|\min\{\|\beta_1 - \beta_2^*\|_2^2, \|\beta_2 - \beta_2^*\|_2^2\}.
\end{equation}
When $\|\beta_1 - \beta_1^*\|_2 <  \|\beta_2 - \beta_1^*\|_2$ and $\|\beta_2 - \beta_2^*\|_2 <  \|\beta_1 - \beta_2^*\|_2$, (\ref{roughlb}) implies 
\begin{equation} \label{lb1}
\mathcal{L} \geq \frac{1}{25}\min\{p_1,p_2\}|\mathcal{S}_+| {\rm err}^2.
\end{equation}
When $\|\beta_1 - \beta_1^*\|_2 >  \|\beta_2 - \beta_1^*\|_2$ and $\|\beta_2 - \beta_2^*\|_2 <  \|\beta_1 - \beta_2^*\|_2$, we have
\begin{align} 
\mathcal{L} & \geq \frac{1}{25}\min\{p_1,p_2\}|\mathcal{S}_+|(\|\beta_2 - \beta_1^*\|_2^2 + \|\beta_2 - \beta_2^*\|_2^2) \\
& \geq \frac{1}{25}\min\{p_1,p_2\}|\mathcal{S}_+|\frac{1}{4}\|\beta_1^* - \beta_2^*\|_2^2. \label{lb2}
\end{align}
Note that it is impossible for $\|\beta_1 - \beta_1^*\|_2 >  \|\beta_2 - \beta_1^*\|_2$ and $\|\beta_2 - \beta_2^*\|_2 >  \|\beta_1 - \beta_2^*\|_2$ both to be true. Otherwise, we could switch the subscripts of the two $\beta$'s. Putting (\ref{lb1}) and (\ref{lb2}) together, we complete the proof.
\end{proof}

\begin{lemma}
\label{MatBound}
Suppose symmetric matrix $\Sigma \in \mathbb{R}^{n\times n}$ has eigenvalues $\lambda_1 \geq \lambda_2 > \lambda_3...$ with corresponding normalized eigenvectors denoted as $u_1, u_2, u_3,...$. Let $M$ be another symmetric matrix with eigenvalues: $\tilde{\lambda}_1 \geq \tilde{\lambda}_2 > \tilde{\lambda}_3...$ and eigenvectors $\tilde{u}_1, \tilde{u}_2, \tilde{u}_3,...$. (a) Let $span\{u_1,u_2\}$ denote the hyperplane spanned by $u_1$ $u_2$. If $\|M - \Sigma\|_2 \leq \varepsilon$, for $\varepsilon < \frac{\lambda_2 - \lambda_3}{2}$ we have
\begin{equation}
\label{spaceAppr}
\|\tilde{u}_i - \mathcal{P}_{T(u_1,u_2)}\tilde{u}_i\|_2^2 \leq \frac{4\varepsilon}{\lambda_2 - \lambda_3}, i = 1,2.
\end{equation} 

Moreover, if $\lambda_1 \ne \lambda_2$,
\begin{equation}
\label{u1Appr}
\|u_1 -\tilde{u}_1 \|_2^2 \leq \frac{4\epsilon}{\lambda_1 - \lambda_2}
\end{equation}

\begin{equation}
\label{u2Appr}
\|u_2 -\tilde{u}_2 \|_2^2 \leq \frac{4\epsilon}{\lambda_1 - \lambda_2} +  \frac{8\epsilon}{\lambda_2 - \lambda_3}
\end{equation}

\end{lemma}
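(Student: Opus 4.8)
The plan is to reduce all three bounds to a single elementary inequality coming from Weyl's theorem, rather than invoking a black-box Davis--Kahan statement. Write $E := M - \Sigma$, so $\|E\|_2 \le \varepsilon$. For any unit eigenvector $\tilde{u}$ of $M$ with eigenvalue $\tilde{\lambda}$, the relation $(\Sigma - \tilde{\lambda}I)\tilde{u} = -E\tilde{u}$ gives $\|(\Sigma - \tilde{\lambda}I)\tilde{u}\|_2 \le \varepsilon$. Expanding $\tilde{u} = \sum_j c_j u_j$ in the orthonormal eigenbasis of $\Sigma$, this is exactly
\[
\sum_j c_j^2 (\lambda_j - \tilde{\lambda})^2 \le \varepsilon^2 .
\]
This is the master inequality: isolating any group of indices $j$ whose eigenvalues $\lambda_j$ are bounded away from $\tilde{\lambda}$ immediately bounds the mass $\sum c_j^2$ that $\tilde{u}$ places on those coordinates, and Weyl's inequality $|\tilde{\lambda}_i - \lambda_i| \le \|E\|_2 \le \varepsilon$ supplies the needed separation.

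For part (a), I would fix $i \in \{1,2\}$ and apply the master inequality to $\tilde{u}_i$. For every $j \ge 3$ we have $\lambda_j \le \lambda_3$ while $\tilde{\lambda}_i \ge \lambda_i - \varepsilon \ge \lambda_2 - \varepsilon$, so $|\lambda_j - \tilde{\lambda}_i| \ge (\lambda_2 - \lambda_3) - \varepsilon \ge (\lambda_2 - \lambda_3)/2$ by the hypothesis $\varepsilon < (\lambda_2 - \lambda_3)/2$. Since $\|\tilde{u}_i - \mathcal{P}_{T(u_1,u_2)}\tilde{u}_i\|_2^2 = \sum_{j \ge 3} c_j^2$, the master inequality yields $\sum_{j\ge3} c_j^2 \le 4\varepsilon^2/(\lambda_2-\lambda_3)^2$, which is at most $4\varepsilon/(\lambda_2-\lambda_3)$ because the gap assumption forces $\varepsilon/(\lambda_2-\lambda_3) < 1$; this is (\ref{spaceAppr}).

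For the $u_1$ bound in part (b), I would again use the master inequality on $\tilde{u}_1$, but now separate the indices $j \ge 2$: there $\lambda_j \le \lambda_2$ and $\tilde{\lambda}_1 \ge \lambda_1 - \varepsilon$, so the relevant gap is $\lambda_1 - \lambda_2$ and $\sin^2\theta := \sum_{j\ge2} c_j^2 \lesssim \varepsilon/(\lambda_1-\lambda_2)$. After fixing the sign of $\tilde{u}_1$ so that $\langle u_1, \tilde{u}_1 \rangle \ge 0$, the identity $\|u_1 - \tilde{u}_1\|_2^2 = 2(1 - \cos\theta) \le 2\sin^2\theta$ converts this into (\ref{u1Appr}). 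The $u_2$ bound is the crux, and I would split the error of $\tilde{u}_2$ into an out-of-subspace part and an in-subspace part. The out-of-subspace part, $\sum_{j\ge3} c_j^2$, is already controlled by part (a) with gap $\lambda_2 - \lambda_3$, giving the $8\varepsilon/(\lambda_2-\lambda_3)$ term. For the in-subspace part I would exploit orthogonality: since $\langle \tilde{u}_1, \tilde{u}_2 \rangle = 0$, writing $\tilde{u}_1 = u_1 + (\tilde{u}_1 - u_1)$ gives $|\langle u_1, \tilde{u}_2 \rangle| = |\langle \tilde{u}_1 - u_1, \tilde{u}_2 \rangle| \le \|u_1 - \tilde{u}_1\|_2$, so the $u_1$-leakage of $\tilde{u}_2$ is controlled by the already-proved $u_1$ estimate, producing the $4\varepsilon/(\lambda_1-\lambda_2)$ term. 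Summing the squared components (and bounding the $(1-c_2)^2$ term by the others via $1 - c_2 \le 1 - c_2^2$) yields (\ref{u2Appr}).

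The main obstacle is exactly the $u_2$ estimate: unlike $u_1$, the vector $u_2$ is separated from $u_1$ only by the possibly small gap $\lambda_1 - \lambda_2$ and from $u_3, u_4, \dots$ only by $\lambda_2 - \lambda_3$, so no single spectral gap controls it and a direct Davis--Kahan application on a single eigenvector fails. The device that makes it work is to refuse to compare $\tilde{u}_2$ with $u_2$ directly and instead transfer the in-subspace rotation onto the already-understood $\tilde{u}_1$ through the orthogonality relation $\langle \tilde{u}_1, \tilde{u}_2 \rangle = 0$. A secondary but genuine nuisance is sign bookkeeping: eigenvectors are defined only up to sign, yet the statement bounds $\|u_i - \tilde{u}_i\|_2$ rather than the sign-free sine of the angle, so one must align signs consistently before performing the cosine-to-sine conversions.
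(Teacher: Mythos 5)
Your argument is correct in outline but follows a genuinely different route from the paper's. The paper never forms the residual $(\Sigma-\tilde{\lambda}I)\tilde{u}$; instead it expands $\tilde{u}_i=\alpha_i u_1+\beta_i u_2+\gamma_i(\cdot)$ and plays two Rayleigh-quotient bounds against each other, namely $\tilde{u}_i^{\top}M\tilde{u}_i\geq\lambda_i-\varepsilon$ (Weyl) and $\tilde{u}_i^{\top}M\tilde{u}_i\leq\tilde{u}_i^{\top}\Sigma\tilde{u}_i+\varepsilon$, which yields the linear-in-gap estimates $\gamma_1^2\leq 2\varepsilon/(\lambda_2-\lambda_3)$ and $\alpha_1^2\geq 1-2\varepsilon/(\lambda_1-\lambda_2)$ directly. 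Your master inequality is a residual, Davis--Kahan-type bound that first produces $\varepsilon^2/\mathrm{gap}^2$ and then relaxes it to $\varepsilon/\mathrm{gap}$ using the gap hypothesis; for (\ref{spaceAppr}) this is fine and in fact slightly sharper, since $\varepsilon<(\lambda_2-\lambda_3)/2$ is assumed. Both proofs handle $\tilde{u}_2$ by the same key device: the in-subspace leakage $\langle u_1,\tilde{u}_2\rangle$ is controlled through orthogonality with $\tilde{u}_1$ rather than by any spectral gap around $\lambda_2$, which is exactly the obstacle you correctly identify.

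Two points need repair. First, in the $u_2$ bound your step $|\langle u_1,\tilde{u}_2\rangle|\leq\|u_1-\tilde{u}_1\|_2$ is lossy by a factor of $2$ after squaring: it gives $c_1^2\leq 4\varepsilon/(\lambda_1-\lambda_2)$, and after your final doubling via $(1-c_2)^2\leq 1-c_2^2$ you land at $8\varepsilon/(\lambda_1-\lambda_2)$ plus the out-of-subspace term, which does not imply the stated bound (\ref{u2Appr}) when $\lambda_1-\lambda_2<\lambda_2-\lambda_3$. Replace that step with Bessel's inequality $\langle u_1,\tilde{u}_1\rangle^2+\langle u_1,\tilde{u}_2\rangle^2\leq\|u_1\|_2^2=1$ (this is, in essence, what the paper uses), giving $c_1^2\leq 1-\langle u_1,\tilde{u}_1\rangle^2\leq 2\varepsilon/(\lambda_1-\lambda_2)$ and recovering the stated constants. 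Second, part (b) assumes only $\lambda_1\neq\lambda_2$, so your separation $|\tilde{\lambda}_1-\lambda_j|\geq\lambda_1-\lambda_2-\varepsilon$ for $j\geq 2$ can be vacuous or negative; you should note that when $\varepsilon\geq(\lambda_1-\lambda_2)/2$ the right-hand sides of (\ref{u1Appr}) and (\ref{u2Appr}) are at least $2$, so the claims hold trivially once signs are aligned so that $\langle u_i,\tilde{u}_i\rangle\geq 0$ (an alignment that, as you observe, is needed for the lemma to be true at all), and only the regime $\varepsilon<(\lambda_1-\lambda_2)/2$ requires the argument.
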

\begin{proof} Suppose $\tilde{u}_1 = \alpha_1u_1 + \beta_1u_2+ \gamma_1w$, $\tilde{u}_2 = \alpha_2u_1 + \beta_2u_2+ \gamma_2v$, where $w,v$ are vector orthogonal to $span\{u_1,u_2\}$. We have $\alpha_1^2 + \beta_1^2 + \gamma_1^2 = \alpha_2^2 + \beta_2^2 + \gamma_2^2 = 1$. Since $\|M - \Sigma\|_2 \leq \varepsilon$, 
\begin{equation}
\label{e1}
\tilde{u}_1^TM\tilde{u}_1 \geq \lambda_1 - \varepsilon
\end{equation} 
\begin{eqnarray}
\label{e2}
\tilde{u}_1^TM\tilde{u}_1 & \leq & \tilde{u}_1^T(M-\Sigma)\tilde{u}_1 + \tilde{u}_1^T\Sigma \tilde{u}_1 \\
 & \leq & \varepsilon + \tilde{u}_1^T\Sigma \tilde{u}_1.
\end{eqnarray} 
Combining (\ref{e1}) and (\ref{e2}), using $\tilde{u}_1^T\Sigma \tilde{u}_1 = \alpha_1^2\lambda_1 + \beta_1^2\lambda_2 + \gamma_1^2\lambda_3$, we get
\begin{equation}
\label{e3}
\alpha_1^2\lambda_1 + \beta_1^2\lambda_2 + \gamma_1^2\lambda_3 \geq \lambda_1 - 2\varepsilon
\end{equation}
Since $\alpha_1^2\lambda_1 + \beta_1^2\lambda_2 + \gamma_1^2\lambda_3 \leq (1 - \gamma_1^2)\lambda_1 + \gamma_1^2\lambda_3$, it implies that
\begin{equation}
\label{e4}
\gamma_1^2 \leq \frac{2\varepsilon}{\lambda_1 - \lambda_3} \leq \frac{2\varepsilon}{\lambda_2 - \lambda_3}.
\end{equation}
We assume $\lambda_1 \neq \lambda_2$. Otherwise , the above inequality also holds for $\tilde{u}_2$, then the proof of (\ref{spaceAppr}) is completed. By using another upper bound $\alpha_1^2\lambda_1 + \beta_1^2\lambda_2 + \gamma_1^2\lambda_3 \leq \alpha_1^2\lambda_1 + (1 - \alpha_1^2)\lambda_2$, the following inequality $\alpha_1^2$ holds
\begin{equation}
\label{alpha_1}
\alpha_1^2 \geq 1 - \frac{2\varepsilon}{\lambda_1 - \lambda_2}.
\end{equation}
Note $\|\tilde{u}_2 - \mathcal{P}_{T(u_1,u_2)}\tilde{u}_2\|_2^2 = \gamma_1^2$, we get the distance bound of $u_1$. Next, we show the bound for $\tilde{u}_2$.
Similar to (\ref{e3}), 
\begin{equation}
\label{e5}
\alpha_2^2\lambda_1 + \beta_2^2\lambda_2 + \gamma_2^2\lambda_3 \geq \lambda_2 - 2\varepsilon.
\end{equation}
Again, by using $\alpha_2^2\lambda_1 + \beta_2^2\lambda_2 + \gamma_2^2\lambda_3 \leq \alpha_2^2\lambda_1 + (1 - \alpha_2^2)\lambda_2$, we get
\begin{equation}
\label{e6}
\gamma_2^2 \leq \frac{2\varepsilon+\alpha_2^2(\lambda_1-\lambda_2)}{\lambda_2 - \lambda_3}.
\end{equation}
We use the condition that $\tilde{u}_1$\vspace{1 pt} $\tilde{u}_2$ are orthogonal. Hence, $\alpha_1^2\alpha_2^2 \leq (1-\alpha_1^2)(1-\alpha_2^2)$. It is easy to see $\alpha_1^2 + \alpha_2^2 \leq 1$. Plugging it into (\ref{e6}) and using (\ref{alpha_1}) result in
\begin{equation}
\label{e7}
\gamma_2^2 \leq \frac{4\varepsilon}{\lambda_2 - \lambda_3}.
\end{equation}
Through (\ref{e4}) and (\ref{e7}), we complete the proof of (\ref{spaceAppr}).

Using some intermediate results, we derive the bounds for eigenvectors in the case $\lambda_1 \ne \lambda_2$.
\begin{align*}
\|u_1 -\tilde{u}_1 \|_2^2 & = (1 - \alpha_1)^2 + \beta_1 ^2 + \gamma_1^2 \\
& = (1 - \alpha_1)^2 + 1 - \alpha_1^2 \\
& \leq 2(1 - \alpha_1^2) \\
& \leq \frac{4\epsilon}{\lambda_1 - \lambda_2}.
\end{align*}
The last inequality follows from (\ref{alpha_1}).

Similarly,
\begin{align*}
\|u_2 -\tilde{u}_2 \|_2^2 & \leq 2(1 - \beta_2^2) \\
& = 2(\alpha_2^2 + \gamma_2^2) \\
& \leq 2(1 - \alpha_1^2 + \gamma_2^2) \\
& \leq \frac{4\epsilon}{\lambda_1 - \lambda_2} +  \frac{8\epsilon}{\lambda_2 - \lambda_3}.
\end{align*}
We obtain the last inequality from (\ref{alpha_1}) and (\ref{e7}).
\end{proof}

\end{document}